\newtheorem{theorem}{Theorem}
\newtheorem{lemma}{Lemma}
\newtheorem{prep}{Preposition}
\newtheorem{corollary}{Corollary}
\newtheorem{definition}{Definition}
\newtheorem{assumption}{Assumption}
\newcommand{\prox}{\operatorname{{prox}}}
\def\b0{{\boldsymbol{0}}}
\def\be{{\boldsymbol{e}}}
\def\bx{{\boldsymbol{x}}}
\def\bs{{\boldsymbol{s}}}
\def\ba{{\boldsymbol{a}}}
\def\bb{{\boldsymbol{b}}}
\def\CDdual-ls{{\sf CDdual-ls}\xspace}
\def\CDdual{{\sf CDdual}\xspace}
\def\liblinear{{\sf LIBLINEAR}\xspace}
\newcommand{\Deltaalpha}{ {\Delta\alpha} }
\def\R{\mathbb{R}}
\def\X{\mathbb{X}}
\def\CCDR1{{\sf CCD++}\xspace}
\def\news{{\sf news20}\xspace}
\def\kddb{{\sf kddb}\xspace}
\def\webspam{{\sf webspam}\xspace}
\def\rcv1{{\sf rcv1}\xspace}
\def\a9a{{\sf a9a}\xspace}
\def\covtype{{\sf covtype}\xspace}
\def\ml1m{{\sf movielens1m}\xspace}
\def\movielens10m{{\sf movielens10m}\xspace}
\newcommand{\bxi}{{\boldsymbol{\xi}}}
\newcommand{\AL}{{\boldsymbol{\alpha}}}
\newcommand{\ALt}{\tilde{{\boldsymbol{\alpha}}}}
\newcommand{\alphat}{\tilde{\alpha}}
\newcommand{\bw}{{\boldsymbol{w}}}
\def\what{\hat{{\boldsymbol w}}}
\def\wbar{\bar{{\boldsymbol w}}}
\newcommand{\bepsilon}{{\boldsymbol{\epsilon}}}
\def\sdcd{{\it DCD}\xspace}
\def\cocoa{{\it CoCoA}\xspace}
\def\ascd{{\it AsySCD}\xspace}
\def\asdcd{{\it PASSCoDe}\xspace} 
\def\atomic{{\it PASSCoDe-Atomic}\xspace} 
\def\wild{{\it PASSCoDe-Wild}\xspace} 
\def\lock{{\it PASSCoDe-Lock}\xspace} 
\def\bfatomic{{\bf PASSCoDe-Atomic}} 
\def\bfwild{{\bf PASSCoDe-Wild}} 
\def\bflock{{\bf PASSCoDe-Lock}}
\def\Z{\mathcal{Z}}
\def\U{\mathcal{U}}
\title{{\bf PASSCoDe}: {\bf P}arallel {\bf AS}ynchronous {\bf S}tochastic \\dual 
{\bf Co}-ordinate {\bf De}scent}
\author{
Cho-Jui Hsieh\\
        {University of Texas, Austin}\\
        {cjhsieh@cs.utexas.edu}
\and
Hsiang-Fu Yu\\
       {University of Texas, Austin}\\
       {rofuyu@cs.utexas.edu}
\and
Inderjit S. Dhillon\\
        {University of Texas, Austin}\\
        {inderjit@cs.utexas.edu}
}
\date{}
\begin{document} 





\maketitle

\begin{abstract} 
  Stochastic Dual Coordinate Descent (\sdcd) has become one of the most efficient
  ways to solve the family of $\ell_2$-regularized empirical risk minimization problems, 
  including linear SVM, logistic regression, 
  and many others. 
  The vanilla implementation of \sdcd is quite slow; however, 
  by maintaining primal variables while updating dual variables, 
  the time complexity of \sdcd can be significantly reduced. Such a strategy forms the core algorithm in the 
  widely-used  LIBLINEAR package. 
  In this paper, we parallelize the \sdcd algorithms in LIBLINEAR. 
  In recent research, several synchronized parallel \sdcd algorithms have been proposed, 
  however, they fail to achieve good speedup in the shared memory multi-core setting. 
  In this paper, we propose a family of asynchronous stochastic dual coordinate descent algorithms (\asdcd).
  Each thread repeatedly selects a random dual variable and conducts coordinate updates using 
  the primal variables that are stored
  in the shared memory. 
  We analyze the convergence properties when different locking/atomic mechanisms are applied. 
  For implementation with atomic operations, we show linear convergence under 
  mild conditions. 
  For implementation without any atomic operations or locking, we present the first {\it backward error analysis}
  for \asdcd under the multi-core environment, showing that the converged solution is the exact solution for 
  a primal problem with perturbed regularizer. 
  Experimental results show that our methods are much faster than previous parallel coordinate
  descent solvers.  
\end{abstract} 

\section{Introduction}
\label{sec:intro}

Given a set of instance-label pairs $(\dot\bx_i, \dot y_i)$, $i=1, \cdots, n$, 
$\dot \bx_i \in \R^d$, $\dot y_i \in \R$, we focus on the following empirical risk 
minimization problem with $\ell_2$-regularization:
\begin{equation}
  \min_{\bw\in \R^d} P(\bw) := \frac{1}{2} \|\bw\|^2 + \sum_{i=1}^n \ell_i(\bw^T \bx_i),  
  \label{eq:primal}
\end{equation}
where $\bx_i = \dot y_i \dot \bx_i$, $\ell_i(\cdot)$ is the loss function and $\|\cdot\|$ is the 2-norm. 
A large class of machine learning problems can be formulated as the above optimization problem. 
Examples include Support Vector Machines (SVMs), logistic regression, ridge regression, and many others. 
Problem~\eqref{eq:primal} is usually called the primal problem, and 
can usually be solved by Stochastic Gradient Descent (SGD)~\citep{ZT04b,SSS07a-short}, 
second order methods~\citep{CJL07a}, or primal coordinate descent algorithms~\citep{KWC07a,FLH09a}. 

Instead of solving the primal problem, another class of algorithms solves the following dual 
problem of \eqref{eq:primal}:
\begin{equation}
  \min_{\AL \in \R^n} D(\AL):= \frac{1}{2} \|\sum_{i=1}^n \alpha_i \bx_i\|^2 + \sum_{i=1}^n \ell_i^*(-\alpha_i), 
  \label{eq:dual}
\end{equation}
where $\ell_i^*(\cdot)$ is the conjugate 
of the loss function $\ell_i(\cdot)$, defined by $\ell^*_i(u) = \max_z (zu-\ell_i(z))$. 
If we define 
\begin{equation}
  \bw(\AL) = \sum_{i=1} \alpha_i \bx_i,
  \label{eq:primal_dual}
\end{equation}
then it is known that  $\bw(\AL^*) = \bw^*$ and $P(\bw^*)=-D(\AL^*)$ where $\bw^*, \AL^*$ are the optimal primal/dual solutions respectively. 
Examples include hinge-loss SVM, square hinge SVM and $\ell_2$-regularized logistic regression. 

Stochastic Dual Coordinate Descent (\sdcd) has become the most widely-used algorithm
for solving \eqref{eq:dual}, and it is faster than primal solvers (including SGD) in many large-scale problems. 
The success of \sdcd is mainly due to the trick of maintaining the primal variables $\bw$ based on the
primal-dual relationship~\eqref{eq:primal_dual}. 
By maintaining $\bw$ in memory, \cite{CJH08a,SSK08a} showed that the time complexity of each 
coordinate update can be reduced from $O(\text{nnz})$ to $O(\text{nnz}/n)$, 
where $\text{nnz}$ is number of nonzeros in the training dataset. 
Several \sdcd algorithms for different machine learning problems are currently implemented in LIBLINEAR~\citep{REF08a-short} 
and they are now widely used in both academia and industry. 
The success of \sdcd has also catalyzed a large body of theoretical studies~\citep{YEN10a,SSS12a}. 

In this paper, we parallelize the \sdcd algorithm in a shared memory multicore system. 
There are two threads of work on parallel coordinate descent. The first thread 
focuses on synchronized algorithms, including synchronized CD~\citep{PR12a,JKB11a-short} and synchronized
\sdcd algorithms~\citep{TY13a,MJ14a}. 
%
%
However, choosing the block size is a trade-off problem between communication and convergence speed, 
so synchronous algorithms usually suffer from slower convergence. 
To overcome this problem, the other thread of work focuses on asynchronous CD algorithms
in multi-core shared memory systems~\citep{JL14a,JL14b}. However, none of the 
existing work maintains both the primal and dual variables. As a result, the 
recent asynchronous CD algorithms end up being much slower than the 
state-of-the-art serial \sdcd algorithms that maintain both $\bw$ and $\AL$, 
as in the \liblinear software. This leads to a challenging question: how to 
maintaining both primal and dual in an asynchronous and efficient way?

In this paper, we propose the first asynchronous dual coordinate descent 
(\asdcd) algorithms with the address to the issue for the primal variable 
maintenance in the shared memory multi-core setting.  
We carefully discuss and analyze three versions of \asdcd: \lock, \atomic, and \wild. 
In \lock, convergence is always guaranteed but the overhead for locking makes it even slower than serial \sdcd. 
In \atomic, the primal-dual relationship~\eqref{eq:primal_dual} is enforced by atomic writes to the shared memory; 
while \wild proceeds without any locking and atomic operations, as a result of 
which the relationship~\eqref{eq:primal_dual} between primal and dual 
variables can be violated due to memory conflicts. 
Our contributions can be summarized below: 
\begin{compactitem}
  \item We propose and analyze a family of asynchronous parallelization of the 
    most efficient \sdcd algorithm: \lock, \atomic, \wild. 
  \item We show linear convergence of \atomic under certain conditions. 
  \item We present a {\em backward error analysis} for \wild and show 
that the converged solution is the exact solution of a primal problem
with a perturbed regularizer.  
Therefore the performance is close-to-optimal on most of the datasets. 
To best of our knowledge, this is the first attempt to analyze a parallel 
machine learning algorithm with memory conflicts using backward error 
analysis, which is a standard tool in numerical analysis~\citep{JHW61a}. 
\item 
Experimental results show that our algorithms 
(\atomic and \wild) are much faster than existing methods. For example, 
on the \webspam dataset, 
\atomic took 2 seconds and \wild took 1.6 seconds to achieve 99\% accuracy,
while \cocoa took 11.5 seconds using 10 threads and \liblinear took 10 seconds using 1 thread to achieve
the same accuracy. 
\end{compactitem}


\section{Related Work}
\label{sec:related}

{\bf Stochastic Coordinate Descent. }
Coordinate descent is a classical optimization technique that has been studied 
for a long time~\citep{DPB95a,ZQL92a}. Recently it has enjoyed renewed interest due
to the success of ``stochastic'' coordinate descent in real applications~\citep{CJH08a,YEN10a}. 
In terms of theoretical analysis, the convergence of (cyclic) 
coordinate descent has been studied for a long time~\citep{ZQL92a,DPB95a}, 
and the global linear convergence is presented recently under certain condition~\citep{SA10a,PWW13a}. 

{\bf Stochastic Dual Coordinate Descent. } 
Many recent papers \citep{CJH08a,HFY10a,SSS12a} have shown that
solving the dual problem using coordinate descent algorithms is faster on large-scale datasets. 
The success of SDCD strongly relies on exploiting the primal-dual relationship \eqref{eq:primal_dual}
to speed up the gradient computation in the dual space. 
\sdcd has become the state-of-the-art solver implemented in \liblinear~\citep{REF08a-short}.
In terms of convergence of dual objective function, some standard theoretical guarantees for coordinate
descent can be directly applied. Different from standard analysis,  
\cite{SSS12a} presented the convergence rate in terms of duality gap.

{\bf Parallel Stochastic Coordinate Descent. }
In order to conduct coordinate updates in parallel, 
\cite{PR12a} studied the algorithm where each processor updates a randomly selected block (or coordinate)
simultaneously, and \cite{JKB11a-short} proposed a similar algorithm for $\ell_1$-regularized problems. 
\cite{CS12a} studied parallel greedy coordinate descent. However, the above synchronized methods
usually face a trade-off in choosing the block size. 
If the block size is small, the load balancing problem leads to slow running time. 
If the block size is large, the convergence speed becomes much slower or the algorithm even diverges.
These problems can be resolved by developing an asynchronous algorithm. 
Asynchronous coordinate descent has been studied by~\citep{DPB89a}, but
they require the Hessian to be diagonal dominant in order to establish the convergence. 
Recently, \cite{JL14b,JL14a} proved linear convergence of asynchronous stochastic coordinate descent
algorithms under the essential strong convexity condition and a ``bounded 
staleness'' condition, where they consider both ``consistent read'' 
and ``inconsistent read'' models. 
\cite{HA14a} showed linear rate of convergence for the asynchronous randomized Gaussian-Seidel
updates, which is a special case of coordinate descent on linear systems. 

{\bf Parallel Stochastic Dual Coordinate Descent. }
For solving~\eqref{eq:subpb-asyn}, 
each coordinate updates only requires the global primal 
variables $\bw$ and one local dual variable $\alpha_i$, 
thus algorithms only need to synchronize $\bw$. 
Based on this observation, \cite{TY13a} proposed to 
update several coordinates or blocks simultaneously and update the global $\bw$, 
and \cite{MJ14a} showed that each block can be solved with other approaches under the same framework. 
However, both these parallel \sdcd methods are synchronized algorithms.

To the best of our knowledge, this is the first to propose and analyze 
asynchronous parallel stochastic dual coordinate descent methods. By 
maintaining a primal solution $\bw$
while updating dual variables, our algorithm is much faster than the previous asynchronous coordinate
descent methods of~\citep{JL14a,JL14b}  
for solving the dual problem~\eqref{eq:dual}. Our algorithms are also faster than 
synchronized dual coordinate descent methods~\citep{TY13a,MJ14a} 
since the latest values of $\bw$ can be accessed by all the threads. 
In terms of theoretical contribution, 
the inconsistent read model in~\citep{JL14a} cannot be directly applied to our algorithm
because each update on $\alpha_i$ is based on the shared $\bw$ vector. 
We further show linear convergence for \atomic, and study the properties
of the converged solution for the {\it wild} version of our algorithm (without any locking and atomic operations)
using a backward error analysis. 
Our algorithm has been successfully applied to solve the collaborative ranking problem~\citep{ITEM_NAME}. 

\vspace{-5pt}
\section{Algorithms}
\label{sec:algorithms}
\vspace{-5pt}
\subsection{Stochastic Dual Coordinate Descent}
We first describe the Stochastic Dual Coordinate Descent (\sdcd) algorithm 
for solving the dual problem~\eqref{eq:dual}. 
At each iteration, \sdcd randomly picks a dual variable $\alpha_i$ and updates it by minimizing the one variable 
subproblem (Eq. \eqref{eq:subpb} in Algorithm 1).
Without exploiting the structure of the quadratic term, the subproblems 
require substantial computation (need $O(\text{nnz})$ time), 
where $\text{nnz}$ is the total number of nonzero elements 
in the training data. 
However, if $\bw(\AL)$ that satisfies~\eqref{eq:primal_dual}
is maintained in memory, 
the subproblem $D(\AL+\delta\be_i)$ can be written as
\begin{align*}
  \vspace{-5pt}  D(\AL + \delta \be_i) =\frac{1}{2}\|\bw+\delta \bx_i\|^2 + \ell_i^*(-(\alpha_i + \delta)), 
\end{align*}
\vspace{-5pt}and the optimal solution can be computed by 
\begin{equation*}
  \delta = \arg\min_{\delta} \frac{1}{2}(\delta + \frac{\bw^T \bx_i}{\|\bx_i\|^2})^2 + 
  \frac{1}{\|\bx_i\|^2}\ell_i^*(-(\alpha_i + \delta)). 
\end{equation*}
Note that all $\|\bx_i\|$ can be pre-computed and are constants. For each coordinate update we only need to solve a simple one-variable subproblem, 
and the main computation is in computing $\bw^T \bx_i$, which requires $O(\text{nnz}/n)$ time. 
For SVM problems, the subproblem has a closed form solution, while 
for logistic regression problems it has to be solved by an iterative solver 
(see~\cite{HFY12c} for details). 
The \sdcd algorithm, which is part of the popular \liblinear package, is described in Algorithm \ref{alg:sdcd}. 

\begin{algorithm}
  \caption{Stochastic Dual Coordinate Descent (\sdcd) }
  \label{alg:sdcd}
  \begin{algorithmic}[1]
    \REQUIRE{Initial $\AL$ and $\bw=\sum_{i=1}^n \alpha_i\bx_i$}
  \WHILE{not converged}
  \STATE Randomly pick $i$

  \STATE Update $\alpha_i\leftarrow \alpha_i + \Delta\alpha_i$, where
  \begin{equation}
    \hspace{-5pt}   \Delta\alpha_i \leftarrow \arg\min_{\delta}  \frac{1}{2}\|\bw + \delta \bx_i\|^2 +\ell^*_i(-(\alpha_i + \delta))
  \label{eq:subpb}
\end{equation}
  \STATE Update $\bw$ by $\bw \leftarrow \bw + \Delta\alpha_i \bx_i$
  \ENDWHILE
 \end{algorithmic}
\end{algorithm}
\vspace{-5pt}
\subsection{Asynchronous Stochastic Dual Coordinate Descent}
\vspace{-5pt}
To parallelize \sdcd in a shared memory multi-core system, 
we propose a family of Asynchronous Stochastic Dual Coordinate Descent (\asdcd) algorithms. 
\asdcd is very simple but effective. Each thread repeatedly run the updates (steps 2 to 4) in Algorithm \ref{alg:sdcd}
using $\bw$, $\AL$, and training data stored in a shared memory. 
The threads do not need to coordinate or synchronize their iterations. 
The details are shown in Algorithm \ref{alg:asdcd}.

Although \asdcd is a simple extension of \sdcd in a multi-core setting, 
there are many options in terms of locking/atomic operations for each step, 
and these choices lead to variations in speed and convergence properties, 
as we will show in this paper. 

\begin{algorithm}
  \caption{Parallel Asynchronous Stochastic dual Co-ordinate Descent (\asdcd)}
  \label{alg:asdcd}
  \begin{algorithmic}
    \REQUIRE{Initial $\AL$ and $\bw=\sum_{i=1}^n \alpha_i\bx_i$}
    \STATE Each thread repeatedly performs the following updates: 
    \STATE \quad step 1: Randomly pick $i$
    \STATE \quad step 2: Update $\alpha_i \leftarrow \alpha_i + \Delta\alpha_i$, 
    where 
  \begin{equation}
   \ \ \ \ \ \ \ \   \Delta\alpha_i \leftarrow \arg\min_{\delta}  \frac{1}{2}\|\bw + \delta \bx_i\|^2 +\ell^*_i(-(\alpha_i + \delta))
  \label{eq:subpb-asyn}
\end{equation}
  \STATE \quad step 3:  Update $\bw$ by $\bw \leftarrow \bw + \Delta\alpha_i \bx_i$
 \end{algorithmic}
\end{algorithm}

Note that the $\Delta\alpha_i$ obtained by subproblem \eqref{eq:subpb-asyn} is 
exactly the same as \eqref{eq:subpb} in Algorithm \ref{alg:sdcd}
if only one thread is involved. However, when there are multiple threads, 
the $\bw$ vector may {\bf not} be the latest one since some other threads have not completed the writes in step 3. 

{\bflock}. To ensure $\bw=\sum_i\alpha_i \bx_i$ for the latest $\AL$, we have to 
lock the following variables between step 1 and 2:
\begin{equation*}
  \text{step 1.5:  \ \ \ lock variables in $N_i:=\{w_t \mid (\bx_i)_t \neq 0\}$. }
\end{equation*}
The locks are then released after step 3. 
With this locking mechanism, \lock will be serializable, i.e., generate the same solution
sequence with the serial \sdcd. 
Unfortunately, threads will waste a lot of time due to the locks, so \lock is 
very slow comparing to the non-locking version (and even slower than 
the serial version of \sdcd). See Table \ref{fig:lock-async} for details. 

\begin{table}
  \caption{ Scaling of \asdcd algorithms. 
  We present the run time (in seconds) for each algorithm on the \rcv1 dataset with 100 iterations, 
  and the speedup of each method over the serial \sdcd algorithm (2x means it is two times faster than the serial algorithm). 
  \label{fig:lock-async}
  }
    \centering
  \begin{tabular}{cccc}
    Number of threads   & Lock & Atomic & Wild \\
   \hline
   \hline
   2   & 98.03s / 0.27x    &  15.28s / 1.75x  &  \bf 14.08s / 1.90x  \\
   \hline
   4  & 106.11s / 0.25x   & 8.35s / 3.20x   &  \bf 7.61s / 3.50x  \\
   \hline
   10  & 114.43s / 0.23x   & 3.86s / 6.91x   & \bf 3.59s / 7.43x \\ 
   \hline
  \end{tabular}
\end{table}


\begin{table}[h]
  \caption{The performance of \wild using $\hat{\bw}$ or $\bar{\bw}$ for prediction. Results show that $\hat{\bw}$ yields much
  better prediction accuracy, which justifies  our theoretical analysis in Section \ref{sec:wild_properties}. \label{tab:w_choice} }
  \centering
  \begin{tabular}{cc|cc|c}
    & & \multicolumn{3}{c}{Prediction Accuracy (\%) by} \\
    & \# threads&  $\hat{\bw}$ & $\bar{\bw}$ & \liblinear\\
    \hline
    \hline
    \multirow{2}{*}{\news}& 4 & 97.1 & 96.1 &  \multirow{2}{*}{97.1} \\
                          & 8 & 97.2 & 93.3 &                        \\
    \hline                                                             
 \multirow{2}{*}{\covtype}& 4 & 67.8 & 38.0 &  \multirow{2}{*}{66.3} \\
                          & 8 & 67.6 & 38.0 &                        \\ 
    \hline                                                              
    \multirow{2}{*}{\rcv1}& 4 & 97.7 & 97.5 &  \multirow{2}{*}{97.7} \\
                          & 8 & 97.7 & 97.4 &                        \\
    \hline                                                             
 \multirow{2}{*}{\webspam}& 4 & 99.1 & 93.1 &  \multirow{2}{*}{99.1} \\
                          & 8 & 99.1 & 88.4 &                        \\
    \hline                                                              
    \multirow{2}{*}{\kddb}& 4 & 88.8 & 79.7 &  \multirow{2}{*}{88.8} \\
                          & 8 & 88.8 & 87.7 &                        
  \end{tabular}
\end{table}
{\bfatomic}. 
The above locking scheme is to ensure that each thread updates $\alpha_i$ based
on the latest $\bw$ values. However, as shown in ~\citep{FN11a-short,JL14a}, 
the effect of using slightly stale values is usually limited in practice. 
Therefore, we propose an \atomic algorithm that avoids locking all the variables in $N_i$ simultaneously. 
Instead, each thread just reads the current $\bw$ values from memory without any locking. 
In practice (see Section~\ref{sec:exp}) we observe that the convergence speed is not significantly affected by using
values of $\bw$. 
However, to ensure that the limit point of the algorithm is still the global optimizer of \eqref{eq:primal}, 
the equation $\bw^*=\sum_i\alpha^*_i\bx_i$
has to be maintained. Therefore, we apply the following ``atomic writes'' in step 3: 
\begin{align*}
  &\text{step 3: For each $j\in N(i)$ }  \\
  &\quad\quad\quad\quad\quad\text{Update $w_{j}\leftarrow w_{j}+ \Delta\alpha_i (\bx_i)_j$ {\bf atomically}}
\end{align*}
\atomic is much faster than \lock as shown in Table \ref{fig:lock-async} since
the atomic writes for a single variable is much faster than locking all the variables. 
However, the convergence of \atomic is not guaranteed by any previous convergence analysis. 
To bridge this gap between practice and theory, 
we prove linear convergence of \atomic under certain conditions in Section \ref{sec:convergence}. 

{\bfwild}.
Finally, we consider Algorithm \ref{alg:asdcd} without any locks and atomic operations. 
The resulting algorithm, \wild, is faster than \atomic and \lock and can achieve almost linear speedup using a single processing unit. 
However, due to the memory conflicts in step 3, some of the
"updates" to $\bw$ will be over-written by other threads. 
As a result, 
the $\hat{\bw}$ and $\hat{\AL}$ outputted by the algorithm usually do not satisfy Eq~\eqref{eq:primal_dual}: 
\begin{equation}
  \hat{\bw} \neq \bar{\bw} := \sum_i \hat{\alpha}_i \bx_i, 
  \label{eq:wbar_define}
\end{equation}
where $\hat{\bw}, \hat{\AL}$ are the primal and dual variables outputted by the algorithm, 
and $\bar{\bw}$ defined in \eqref{eq:wbar_define} is computed from $\hat{\AL}$. 
It is easy to see that $\hat{\AL}$ is not the optimal solution of \eqref{eq:dual}. 
Due to the same reason, in the prediction phase it is not clear whether $\hat{\bw}$ or $\bar{\bw}$ should be used. 
To answer this question, in Section \ref{sec:convergence} we show that $\hat{\bw}$ is actually the optimal solution
of a perturbed primal problem \eqref{eq:primal} using a backward error analysis, where the loss function is the same and the regularization term is slightly perturbed. 
As a result, the prediction should be done using $\hat{\bw}$, and this also yields much better performance in practice, as shown
in Table \ref{tab:w_choice} below.

We summarize the behavior of the three algorithms in Figure~\ref{fig:spectrum}. 
Using locks, the algorithm \lock is serializable but very slow (even slower 
than the serial \sdcd). In the other extreme, the wild version without any lock
and atomic operation has very good speed up, but the behavior can be totally
different from the serial \sdcd. Luckily, in Section \ref{sec:convergence}
we provide the convergence guarantee for \atomic, and apply a backward error
analysis to show that \wild will converge to the solution with the same loss
function with a slightly perturbed regularizer. 

\begin{figure}
\begin{center}
  \begin{tabular}{@{}l@{ }c@{}ccc@{}c}
           &   &Locks       & Atomic Ops & Nothing &\\
   Scaling: & Poor &\multicolumn{3}{c}{
      \begin{tikzpicture}
        \draw[<->,very thick] (-2,0)--(2,0);
      \end{tikzpicture}
    } & Good
    \\
   Serializability: & Perfect &\multicolumn{3}{c}{
      \begin{tikzpicture}
        \draw[<->,very thick] (-2,0)--(2,0);
      \end{tikzpicture}
    } & Poor
\end{tabular}
\end{center}
\caption{Spectrum for the choice of mechanism to avoid memory conflicts for \asdcd.}
\label{fig:spectrum}
\end{figure}

\subsection{Implementation Details}

{\bf Deadlock Avoidance.} Without a proper implementation, the 
deadlock can arise in \lock because a thread needs to acquire all the locks 
associated with $N_i$. A simple way to avoid deadlock is by associating an 
ordering for all the locks such that each thread follows the same ordering to 
acquire the locks. 

\noindent {\bf Random Permutation.} In \liblinear, the random sampling (step 2) of 
Algorithm \ref{alg:sdcd} is replaced by the index from a random permutation, 
such that each $\alpha_i$ can be selected in $n$ steps in stead of $n\log n$ 
steps in expectation. Random permutation can be easily implemented 
asynchronously for Algorithm \ref{alg:asdcd} as follows. Initially, 
given $p$ threads, $\{1,\dots,n\}$ is randomly partitioned into $p$ blocks.
Then, each thread can asynchronously generate the random permutation on its own block 
of variables. 

\noindent {\bf Shrinking Heuristic.} For loss such as hinge and squared-hinge, the 
optimal $\AL^*$ is usually sparse. Based on this property, a shrinking 
strategy was proposed by \cite{CJH08a} to further speed up \sdcd. This 
heuristic is also implemented in \liblinear. The idea is to maintain an active 
set by skipping variables which tend to be fixed. This heuristic can also be 
implemented in Algorithm \ref{alg:asdcd} by maintaining an active set for each 
thread.  

\noindent {\bf Thread Affinity.} The memory design of most modern multi-core machines 
is {\em non-uniform memory access (NUMA)}, where a core has faster memory 
access to its local memory socket. To reduce possible latency due to the 
remote socket access, we should bind each thread to a physical core and 
allocate data in its local memory. Note that the current OpenMP does not 
support this functionality for thread affinity. Library such as 
libnuma 
can be used to enforce 
thread affinity. 
\section{Convergence Analysis}
\label{sec:convergence}
In this section we formally analyze the convergence properties of our proposed algorithms
in Section \ref{sec:algorithms}.
Note that all the proofs can be found in the Appendix.  
We assign a global counter $j$ for the total number of updates, 
and the index $i(j)$ denotes the component selected at step $j$. We define $\{\AL^1, \AL^2, \dots\}$ 
to be the sequence generated by our algorithms, and 
\vspace{-10pt}\begin{equation*}
  \Delta \alpha_j = \alpha^{j+1}_{i(j)} - \alpha^j_{i(j)}. 
\end{equation*}
The update $\Delta\alpha_{j}$ at iteration $j$ is obtained by solving 
\begin{equation*}
  \Delta\alpha_j \leftarrow \arg\min_{\delta} \frac{1}{2}\|\hat{\bw}^j +\delta\bx_{i(j)}\|^2 + \ell^*_{i(j)}(-(\alpha_{i(j)} +\delta)), 
\end{equation*}
where $\what^j$ is the current $\bw$ in the memory. 
We use $\bw^j=\sum_i \alpha_i^{j}\bx_i$ to denote the ``accurate'' $\bw$ 
at iteration $j$. 

In the \lock setting, $\bw^j=\what^j$ is ensured by using the locks. However, 
in \atomic and \wild, $\what^j\neq \bw^j$ because 
some of the updates have not been written into the shared memory. 
To capture this phenomenon, we define $\Z^j$ to be the set of all ``updates to $\bw$'' before iteration $j$: 
\begin{equation*}
  \Z^j := \{(t,k) \mid t<j, k\in N(i(t))\}, 
\end{equation*}
where $N(i(t)):=\{u\mid X_{i(t),u}\neq 0\}$ is all nonzero features in $\bx_{i(t)}$.
We define $\U^j\subseteq \Z^j$ to be the updates that have already been written into $\what^j$. 
Therefore, we have 
\begin{equation*}
  \what^j = \sum_{(t,k)\in \U^j} (\Delta\alpha_t)X_{i(t),k}\be_k. 
\end{equation*}
\vspace{-5pt}
\subsection{Linear Convergence of \atomic}

In \atomic, we assume all the updates before the $(j-\tau)$-th iteration has been written
into $\what^j$, therefore, 
\begin{assumption}
  The set $\U^j$ satisfies $\Z^{j-\tau}\subseteq \U^j \subseteq \Z^j$. 
\end{assumption}
Now we define some constants used in our theoretical analysis. Note that $X\in \R^{n\times d}$
is the data matrix, and we use $\bar{X}\in \R^{n\times d}$
to denote the normalized data matrix where each row is $\bar{\bx}_i^T=\bx_i^T/\|\bx_i\|^2$. 
We then define 
\begin{equation*}
  M_i = \max_{S\subseteq [d]} \|\sum_{t\in S} \bar{X}_{:,t} X_{i,t}\|, \ \ M=\max_i M_i, 
\end{equation*}
where $[d]:=\{1,\dots,d\}$ is the set of all the feature indices, and $\bar{X}_{:,t}$ is the $t$-th column of $\bar{X}$. 
We also define $L_{max}$ to be the Lipschitz constant of $D(\cdot)$ within the level set $\{\AL\mid D(\AL)\leq D(\AL^0)\}$, 
$R_{min}=\min_i \|\bx_i\|^2$, $R_{max}=\max_i\|\bx_i\|^2$. We assume that 
$R_{max}=1$ and there is no zero training sample, so $R_{min}>0$. 

To prove the convergence of asynchronous algorithms, we first show that the expected step size does not increase 
super-linearly by the following Lemma \ref{lm:rho_decrease}. 
\begin{lemma}
  If $\tau$ is small enough such that
    \begin{equation}
      (6\tau(\tau+1)^2eM)/\sqrt{n} \leq 1, 
    \label{eq:condition_tau}
  \end{equation}
  then \atomic satisfies the following inequality:
\begin{equation}
  E(\|\AL^{j-1}-\AL^{j}\|^2) \leq \rho E(\|\AL^j - \AL^{j+1}\|^2), 
  \label{eq:rho_decrease}
\end{equation}
where 
 $ \rho = (1+\frac{6(\tau+1)eM}{\sqrt{n}})^2$.
\label{lm:rho_decrease}
\end{lemma}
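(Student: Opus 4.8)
The plan is to reduce the statement to a one-dimensional comparison of consecutive update magnitudes and then bound the discrepancy using the bounded-staleness assumption. Since $\AL^{j-1}$ and $\AL^{j}$ differ only in coordinate $i(j-1)$, we have $\|\AL^{j-1}-\AL^{j}\|^2=|\Delta\alpha_{j-1}|^2$ and likewise $\|\AL^{j}-\AL^{j+1}\|^2=|\Delta\alpha_{j}|^2$, so it suffices to prove $E|\Delta\alpha_{j-1}|^2\le\rho\,E|\Delta\alpha_{j}|^2$. For each coordinate $i$ let $d_i^{j}$ denote the update that step $j$ would produce if it selected $i$ while reading $\what^{j}$; then $\Delta\alpha_{j}=d_{i(j)}^{j}$, and because $i(j)$ is uniform on $[d]$'s analogue $\{1,\dots,n\}$ and independent of $\what^{j}$, $E|\Delta\alpha_{j}|^2=\tfrac1n\,E\sum_{i}|d_i^{j}|^2$. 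Writing $A_j:=\big(E\sum_i|d_i^{j}|^2\big)^{1/2}=\sqrt{n}\,(E|\Delta\alpha_j|^2)^{1/2}$, the goal becomes $A_{j-1}\le(1+\tfrac{6(\tau+1)eM}{\sqrt n})A_j$.

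Next I would make the per-coordinate update explicit. Completing the square in \eqref{eq:subpb-asyn} shows $\alpha_i^{j}+d_i^{j}=P_i(\alpha_i^{j}-\bar{\bx}_i^T\what^{j})$, where $P_i=\prox_{\ell_i^*(-\cdot)/\|\bx_i\|^2}$ is non-expansive because $\ell_i^*$ is convex. Subtracting the analogous identity at step $j-1$ and using $1$-Lipschitzness of $P_i$ gives the pointwise bound $|d_i^{j-1}-d_i^{j}|\le 2|\alpha_i^{j-1}-\alpha_i^{j}|+|\bar{\bx}_i^T(\what^{j-1}-\what^{j})|$. The first term is supported only on $i=i(j-1)$ and contributes $2|\Delta\alpha_{j-1}|$ in the $\ell^2$ norm over $i$; the second term is where the delay and the constant $M$ enter.

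Then I would control $\what^{j-1}-\what^{j}$. By Assumption 1 the symmetric difference of the written sets $\U^{j-1}$ and $\U^{j}$ only involves updates from a window $W\subseteq\{j-1-\tau,\dots,j-1\}$ of at most $\tau+1$ iterations, so $\what^{j-1}-\what^{j}=\sum_{t\in W}\Delta\alpha_t\sum_{k\in S_t}(\pm1)X_{i(t),k}\be_k$ for subsets $S_t\subseteq N(i(t))$. Taking the $\ell^2$ norm over $i$, applying Minkowski over $t$, and recognizing that the $i$-th entry of $\sum_{k\in S_t}\bar{X}_{:,k}X_{i(t),k}$ is exactly $\bar{\bx}_i^T(\sum_{k\in S_t}X_{i(t),k}\be_k)$, the definition of $M_i$ yields $\big(\sum_i|\bar{\bx}_i^T(\what^{j-1}-\what^{j})|^2\big)^{1/2}\le M\sum_{t\in W}|\Delta\alpha_t|$. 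Combining with the previous bound, passing to $L^2$-norms over the sample path (Minkowski again), and using $\||\Delta\alpha_t|\|_{L^2}=A_t/\sqrt n$ produces the recursion $A_{j-1}\le A_j+\tfrac{2}{\sqrt n}A_{j-1}+\tfrac{M}{\sqrt n}\sum_{t=j-1-\tau}^{\,j-1}A_t$.

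Finally I would close the recursion by induction on $j$. Assuming the desired inequality for all earlier indices, chaining gives $A_t\le c^{\,(j-1)-t}A_{j-1}$ with $c=1+\tfrac{6(\tau+1)eM}{\sqrt n}$, so $\sum_{t=j-1-\tau}^{j-1}A_t\le A_{j-1}\sum_{s=0}^{\tau}c^{s}$. Hypothesis \eqref{eq:condition_tau} is precisely what forces $\tau\ln c\le 1$, hence $c^{\tau}\le e$ and $\sum_{s=0}^{\tau}c^s\le (\tau+1)e$; substituting, moving all $A_{j-1}$ terms to the left, and bounding $1/(1-x)$ for the resulting small $x$ gives $A_{j-1}\le c\,A_j$, which is the claim after squaring. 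I expect the main obstacle to be this last constant-bookkeeping step: the magnitude $|\Delta\alpha_{j-1}|$ appears on both sides of the recursion (a self-reference coming from the window including $t=j-1$), so the argument only closes because \eqref{eq:condition_tau} keeps $c$ close to $1$. Lining up the precise factors $6$, $e$, and $(\tau+1)$, together with a clean base case for the first few iterations where the window is truncated, is the delicate part, whereas the prox non-expansiveness and the $M$-bound are essentially mechanical.
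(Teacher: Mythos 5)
Your proposal is correct in substance and shares the paper's overall skeleton---induction on the iteration counter, reduction to the ``full update'' vector via uniformity of $i(j)$ (the paper's Preposition \ref{prep:exp_normsq}), prox non-expansiveness, and the $M$-based control of the staleness term---but the device you use to derive and close the recursion is genuinely different. The paper works with expectations of \emph{squared} norms throughout: it starts from $\|\ba\|^2-\|\bb\|^2\le 2\|\ba\|\|\bb-\ba\|$ (its \eqref{eq:NN}), bounds each cross term $E[\|\AL^t-\AL^{t+1}\|\,\|\AL^{j-1}-\ALt^j\|]$ by AM--GM with geometric weights $\rho^{(t+1-j)/2}$ (its \eqref{eq:GG}), and closes with the arithmetic estimate $\rho^{-1}\le 1-(4+4M+4M\theta)/\sqrt{n}$ of Preposition \ref{prep:lagcond}. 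You instead work with the square roots $A_j$ ($L^2$ norms over the sample path) and obtain the linear recursion $A_{j-1}\le A_j+\tfrac{2}{\sqrt n}A_{j-1}+\tfrac{M}{\sqrt n}\sum_{t=j-1-\tau}^{j-1}A_t$ by Minkowski's inequality alone, then chain the induction hypothesis geometrically ($c^\tau\le e$, $\sum_{s=0}^\tau c^s\le(\tau+1)e$). This buys simplicity: the squared-difference inequality and the weighted AM--GM step disappear, and what remains is exactly the constant bookkeeping of Preposition \ref{prep:lagcond}. That bookkeeping does close under \eqref{eq:condition_tau}: with $u=(\tau+1)eM/\sqrt n$ and $v=2/\sqrt n$ one needs $(1+6u)(1-v-u)\ge 1$, which holds because $v\le u$ and \eqref{eq:condition_tau} forces $u\le \tfrac{1}{6\tau(\tau+1)}\le \tfrac1{12}$; note this uses $M\ge 1$ (a consequence of $R_{max}=1$) and $\tau\ge 1$, the same implicit requirements as the paper's argument.

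Two minor caveats. First, in your decomposition of $\what^{j-1}-\what^{j}$ over the symmetric difference of $\U^{j-1}$ and $\U^{j}$, the per-$t$ signed sums $\sum_{k\in S_t}(\pm 1)\bar{X}_{:,k}X_{i(t),k}$ are not literally covered by the definition of $M_i$, which is a maximum over \emph{unsigned} subset sums; either note that writes are never undone (so $\U^{j-1}\subseteq\U^{j}$ and all signs agree), or split positive and negative parts at the cost of a factor $2$, which changes the recursion constant to $2+2(\tau+1)eM$ but still closes since $u\le\tfrac1{12}$. (The paper avoids this by routing the triangle inequality through $\bar{X}\wbar^{j}$ and $\bar{X}\wbar^{j-1}$ and invoking Preposition \ref{prep:Xw_diff} twice, at the cost of slightly larger constants.) Second, your identity $E|\Delta\alpha_t|^2=A_t^2/n$ requires $i(t)$ to be independent of both $\AL^t$ and $\what^t$, not just of $\what^t$; this is the same assumption underlying Preposition \ref{prep:exp_normsq}, so you are no worse off than the paper, but it should be stated.
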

The detailed proof is in Appendix \ref{app:lemma}. 
We use a similar technique as in~\citep{JL14a} to prove this lemma, but the proof is different from~\citep{JL14a} because  
\begin{compactitem}
\item Their ``inconsistent read'' model assumes $\hat{\bw}^j = \sum_i \dot{\alpha}_i \bx_i$ for some $\dot{\AL}$. 
  However, in our case $\what^j$ may not be written in this form due to 
  incomplete updates in step 3 of Algorithm \ref{alg:asdcd}. 
\item In~\citep{JL14a}, each coordinate is updated by $\gamma \nabla_t f(\AL)$ with a fixed step size $\gamma$. We
  consider the case that each subproblem \eqref{eq:subpb} is solved exactly. 
\end{compactitem}
To show the linear convergence of our algorithms, we assume the objective function~\eqref{eq:dual} 
satisfies the following property: 
\begin{definition}
  The objective function \eqref{eq:dual} admits the {\it global error bound}
  if there is a constant $\kappa$ such that
  \begin{equation}
    \|\AL-P_S(\AL)\| \leq \kappa \|T(\AL)-\AL\|, 
    \label{eq:globalbound}
  \end{equation}
  where $P_S(\cdot)$ is the projection to the set of optimal solutions, and  
  $T:R^n\rightarrow R^n$ is the operator defined by 
  \begin{equation*}
    T_t(\AL) = \arg\min_{u}\ D(\AL+(u-\alpha_t)\be_t)\quad \forall t=1,\dots,n.
\end{equation*}
  The objective function satisfies the {\it global error bound from the beginning}  if \eqref{eq:globalbound} holds for all $\AL$ satisfying 
  \begin{equation*}
    D(\AL) \leq D(\AL^0)
  \end{equation*}
  where $\AL^0$ is the initial point. 
  \label{def:global_error}
\end{definition}
This definition is a generalized version of Definition 6 in \citep{PWW13a}. 
We list several important machine learning problems that admit global error bounds: 
\begin{itemize}
  \item Support Vector Machines (SVM) with hinge loss~\citep{BB92a}:
    \begin{align}
      \ell_i(z_i) &= C\max(1-z_i , 0) \nonumber\\
      \ell_i^*(-\alpha_i) &= 
      \begin{cases}
        -\alpha_i & \text{ if } 0\leq \alpha_i \leq C,\\
        \infty & \text{ otherwise.}
      \end{cases}
      \label{eq:hinge_svm}
    \end{align}
  \item Support Vector Machines (SVM) with square hinge loss: 
    \begin{align}
      \ell_i(z_i) &= C\max(1-z_i, 0)^2. \nonumber \\ 
      \ell_i^*(-\alpha_i) &= 
      \begin{cases}
        -\alpha_i + \alpha_i^2/4C &\text{ if } \alpha_i \ge 0, \\
        \infty &\text{ otherwise}.
      \end{cases}
    \end{align}
 \end{itemize}
 Note that $C>0$ is the penalty parameter that controls the weights between loss and regularization. 
 \begin{theorem}
   The Support Vector Machines (SVM) with hinge loss or square hinge loss satisfy the global error bound~\eqref{eq:globalbound}. 
 \end{theorem}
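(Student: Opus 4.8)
The plan is to exploit the fact that for both losses the dual \eqref{eq:dual} is a convex quadratic program over a box, and then apply the classical Luo--Tseng error bound for such programs. First I would write the two duals explicitly. Substituting the conjugate from \eqref{eq:hinge_svm}, the hinge-loss dual becomes
\begin{equation*}
  D(\AL) = \tfrac12 \AL^T Q \AL - \mathbf{1}^T\AL, \quad Q_{st} = \bx_s^T\bx_t, \quad \text{subject to } \AL\in[0,C]^n,
\end{equation*}
while the square-hinge dual is the same quadratic with $Q$ replaced by $Q+\tfrac{1}{2C}I$ over the orthant $[0,\infty)^n$. In the first case $Q=XX^T$ is only positive semidefinite (and generically singular when $n>d$), whereas in the second the added $\tfrac{1}{2C}I$ makes the Hessian positive definite, i.e.\ the objective is strongly convex. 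In both cases $D$ is a convex quadratic over a polyhedral (indeed box) feasible set $\mathcal{F}$.

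Next I would rewrite the operator $T$ as a scaled projected-gradient step, so that $\|T(\AL)-\AL\|$ can be compared with a standard optimality residual. Because the box is separable, the one-variable minimizer has the closed form
\begin{equation*}
  T_t(\AL) = \Pi_{\mathcal{F}_t}\!\Big(\alpha_t - \tfrac{1}{\hat Q_{tt}}(\nabla D(\AL))_t\Big),
\end{equation*}
where $\hat Q_{tt}=\|\bx_t\|^2$ for the hinge loss and $\|\bx_t\|^2+\tfrac{1}{2C}$ for the square hinge, and $\Pi_{\mathcal{F}_t}$ denotes projection onto the $t$-th interval. Since $R_{min}\le \hat Q_{tt}\le R_{max}=1$ (up to the additive $\tfrac{1}{2C}$ in the square-hinge case), the per-coordinate step sizes are bounded away from $0$ and $\infty$. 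Using the standard monotonicity of the projected-gradient residual in the step size, applied coordinate by coordinate, I would show that $\|T(\AL)-\AL\|$ is equivalent, up to constants depending only on $R_{min},R_{max},C$, to the unit-step residual $\|\AL - \Pi_{\mathcal{F}}(\AL-\nabla D(\AL))\|$.

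Finally I would invoke the Luo--Tseng global error bound for convex quadratic programs over polyhedral sets (as used in \citep{PWW13a}): there is a constant $\kappa_0$ with $\|\AL-P_S(\AL)\|\le \kappa_0\,\|\AL - \Pi_{\mathcal{F}}(\AL-\nabla D(\AL))\|$ on the level set $\{\AL\mid D(\AL)\le D(\AL^0)\}$ appearing in Definition~\ref{def:global_error}; composing with the equivalence from the previous step yields \eqref{eq:globalbound} for a suitable $\kappa$. For the square-hinge case strong convexity gives the same conclusion directly, so there only the value of the constant changes.

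The main obstacle is the hinge case, where $Q=XX^T$ is generically singular and strong convexity is unavailable, so the estimate must rest entirely on the polyhedral-QP error bound rather than on an elementary strong-convexity argument; care is also needed to restrict attention to the level set so that $L_{max}$ and the resulting $\kappa$ are finite and \eqref{eq:globalbound} holds \emph{from the beginning}. The equivalence between the exact coordinate-minimization residual $\|T(\AL)-\AL\|$ and the ordinary projected-gradient residual is routine but must be verified per coordinate, since each coordinate carries its own step size $1/\hat Q_{tt}$.
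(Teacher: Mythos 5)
Your proposal is correct and follows essentially the same route as the paper: both express the coordinate-minimization map $T_t$ as a projected-gradient step with per-coordinate step size $1/\|\bx_t\|^2$, compare its residual to the unit-step projected-gradient residual via the step-size monotonicity result (Lemma 26 of \citep{PWW13a}), and then invoke the Luo--Tseng-type global error bound for the polyhedral dual QP (Theorem 18 of \citep{PWW13a}) to obtain \eqref{eq:globalbound} with a constant of the form $\kappa_0\sqrt{n}$. Your observation that the square-hinge dual is strongly convex (so the bound follows directly there) and your explicit inclusion of the $\alpha_t$ term in the projected-gradient expression are fine refinements, but they do not change the substance of the argument.
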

 \begin{proof}
   For SVM with hinge loss, each element of the mapping $T(\cdot)$ can be written as
   \begin{align*}
     T_t(\AL) &= \arg\min_{u}\ D(\AL + (u-\alpha_t)\be_t) \\
     &= \arg\min_{u}\ \frac{1}{2}\|\bw(\AL) + (u-\alpha_t)\bx_t\|^2 +\ell^*(-u)\\
              &= \Pi_{\X}\big(\frac{\bw(\AL)^T\bx_t - 1}{\|\bx_t\|^2}\big)
              = \Pi_{\X}\big(\frac{\nabla_t D(\AL)}{\|\bx_t\|^2}\big),\\
   \end{align*}
   where $\Pi_{\X}$ is the projection to the set $\X$, and for hinge-loss SVM $\X:=[0, \ C]$. 
   Using Lemma 26 in~\citep{PWW13a}, we can show that for all $t=1,\dots,n$
   \begin{align*}
         \big|\alpha_t - \Pi_{\X}\big(\frac{\nabla_t D(\AL)}{\|\bx_t\|^2}\big)\big| \ge &\min(1, \frac{1}{\|\bx_t\|^2})\big|\alpha_t - 
         \Pi_{\X}\big(\nabla_t D(\AL)\big)\big| \\
         \ge &\min(1, \frac{1}{R_{max}^2})\big|\alpha_t - 
         \Pi_{\X}\big(\nabla_t D(\AL)\big)\big| \\
         \ge &\big|\alpha_t - \Pi_{\X}\big(\nabla_t D(\AL)\big)\big|,
   \end{align*}
   where the last inequality is due to the assumption that $R_{max}=1$.
   Therefore, 
   \begin{align*}
     \|\AL- T(\AL)\|_2 &\ge \frac{1}{\sqrt{n}}\|\AL-T(\AL)\|_{1}  \\
     &\ge \frac{1}{\sqrt{n}}\sum_{t=1}^n |\alpha_t - \Pi_{\X}\big(\nabla_t D(\AL)\big)|\\
     &= \frac{1}{\sqrt{n}} \|\nabla^+ D(\AL)\|_1 \\
     &\ge \frac{1}{\sqrt{n}} \|\nabla^+ D(\AL)\|_2 \\
     &\ge \frac{1}{\kappa_0\sqrt{n}} \|\AL - P_S(\AL)\|_2,
   \end{align*}
   where $\nabla^+ D(\AL)$ is the projected gradient defined in Definition 5 
   of~\citep{PWW13a} and $\kappa_0$ is the $\kappa$ defined in Theorem 18 of~\citep{PWW13a}. 
   Thus, with $\kappa = \kappa_0\sqrt{n}$, we obtain that the dual function of 
   the hinge-loss SVM satisfies the global error bound defined in 
   Definition~\ref{def:global_error}. 
   Similarly, we can show that the SVM with squared-hinge loss satisfies the 
   global error bound.  
 \end{proof}

Next we explicitly state
the linear convergence guarantee for \atomic.  
\begin{theorem}
  Assume the objective function \eqref{eq:dual} admits a global error bound from the beginning and
  the Lipschitz constant $L_{max}$ is finite in the level set. If \eqref{eq:condition_tau} holds and
  \begin{equation*}
    1 \geq \frac{2L_{max}}{R_{min}^2}(1+\frac{e\tau M}{\sqrt{n}}) (\frac{\tau^2 M^2 e^2}{n})
  \end{equation*}
  then 
  \atomic has a global linear convergence rate in expectation, that is, 
  \begin{equation}
    E[ D(\AL^{j+1}) ] -  D(\AL^*)  \leq \eta \left(E[ D(\AL^j)] - D(\AL^*)\right) , 
  \end{equation}
  where $\AL^*$ is the optimal solution and 
  \begin{equation}
    \eta = 1-\frac{\kappa}{L_{max}}(1-\frac{2L_{max}}{R_{min}^2}(1+\frac{e\tau M}{\sqrt{n}}) (\frac{\tau^2 M^2 e^2}{n}))
  \end{equation}
  \label{thm:converge_atomic}
\end{theorem}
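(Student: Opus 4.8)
The plan is to prove a one-step expected descent of the form $E[D(\AL^j)]-E[D(\AL^{j+1})]\ge c\,E[\|T(\AL^j)-\AL^j\|^2]$ for a constant $c>0$, and then convert the progress measure $\|T(\AL^j)-\AL^j\|^2$ into the objective gap through the global error bound and the Lipschitz constant $L_{max}$. Once the descent is in hand, the global error bound \eqref{eq:globalbound} lower-bounds $\|T(\AL^j)-\AL^j\|^2$ by a constant multiple of $\|\AL^j-P_S(\AL^j)\|^2$, and the $L_{max}$-Lipschitz continuity of $\nabla D$ together with the optimality of $P_S(\AL^j)$ gives $D(\AL^j)-D(\AL^*)\le\tfrac{L_{max}}{2}\|\AL^j-P_S(\AL^j)\|^2$. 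Chaining these converts the descent into $E[D(\AL^j)-D(\AL^{j+1})]\ge(1-\eta)\,(E[D(\AL^j)]-D(\AL^*))$, which rearranges to the claimed geometric rate; the fact that $c>0$ is exactly what makes $\eta<1$.

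First I would derive the per-iteration descent. Since $\Delta\alpha_j$ exactly minimizes the one-variable subproblem \eqref{eq:subpb-asyn} formed from the stale read $\what^j$, that slice is $\|\bx_{i(j)}\|^2$-strongly convex, and its derivative with respect to the \emph{true} objective $D$ (whose slice uses $\bw^j=\sum_i\alpha_i^j\bx_i$) equals $\bx_{i(j)}^T(\bw^j-\what^j)$ at $\Delta\alpha_j$ rather than zero. Strong convexity then yields
\begin{equation*}
  D(\AL^j)-D(\AL^{j+1})\ \ge\ \tfrac{R_{min}}{2}(\Delta\alpha_j)^2-\bx_{i(j)}^T(\bw^j-\what^j)\,\Delta\alpha_j,
\end{equation*}
so all of the asynchrony is concentrated in the cross term, which measures the gap between the stale $\what^j$ and the accurate $\bw^j$.

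Next I would bound this cross term using Assumption 1 and the constant $M$. By Assumption 1 every missing update in $\Z^j\setminus\U^j$ comes from the window $t\in\{j-\tau,\dots,j-1\}$, so $\bw^j-\what^j=\sum_{(t,k)\in\Z^j\setminus\U^j}\Delta\alpha_t X_{i(t),k}\be_k$ involves at most $\tau$ earlier steps. Dividing through by $\|\bx_{i(j)}\|^2$ and using the definition $M_i=\max_{S\subseteq[d]}\|\sum_{t\in S}\bar X_{:,t}X_{i,t}\|$ bounds $|\bar\bx_{i(j)}^T(\bw^j-\what^j)|$ by $M$ times $\sum_{t=j-\tau}^{j-1}|\Delta\alpha_t|$. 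The key device to tame these lagged step sizes is Lemma~\ref{lm:rho_decrease}: because $\|\AL^t-\AL^{t+1}\|^2=(\Delta\alpha_t)^2$, iterating \eqref{eq:rho_decrease} across the window bounds each $E[(\Delta\alpha_t)^2]$ by $\rho^{\,j-t}E[(\Delta\alpha_j)^2]$, and condition \eqref{eq:condition_tau} keeps the accumulated factor $\rho^{\tau}$ controlled by a constant of order $e$. Taking expectation over the uniformly random coordinate $i(j)$ then brings $\tfrac{1}{n}\|T(\AL^j)-\AL^j\|^2$ into the quadratic term (again after controlling the stale-versus-true step discrepancy), while the cross term contributes a penalty of order $\tfrac{L_{max}}{R_{min}^2}(1+\tfrac{e\tau M}{\sqrt n})(\tfrac{\tau^2M^2e^2}{n})\,E[\|T(\AL^j)-\AL^j\|^2]$. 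The stated hypothesis $1\ge\tfrac{2L_{max}}{R_{min}^2}(1+\tfrac{e\tau M}{\sqrt n})(\tfrac{\tau^2M^2e^2}{n})$ is precisely the requirement that the descent term dominate this penalty, leaving $c>0$.

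The hard part will be the cross-term control. As noted after Lemma~\ref{lm:rho_decrease}, $\what^j$ cannot be written as $\sum_i\dot\alpha_i\bx_i$ for any dual vector, so the stale read is not an evaluation of $D$ at a perturbed dual point and the error must be tracked coordinate-by-coordinate over $\Z^j\setminus\U^j$. In addition, the step actually taken, $\Delta\alpha_j=\tilde T_{i(j)}(\AL^j)-\alpha_{i(j)}^j$, must be related to the exact coordinate step $T_{i(j)}(\AL^j)-\alpha_{i(j)}^j$ appearing in the error bound; this uses nonexpansiveness of the coordinate-minimization operator in its perturbed linear term and generates further lag terms that, once more, are absorbed through Lemma~\ref{lm:rho_decrease}. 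After this $\tau$-window bookkeeping is completed and the hypothesis forces $c>0$, assembling the global error bound and the smoothness estimate into the geometric rate is routine.
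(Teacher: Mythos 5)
Your high-level skeleton is the same as the paper's: a one-step expected descent of the form $E[D(\AL^j)]-E[D(\AL^{j+1})]\ge c\,E[\|T(\bw^j,\AL^j)-\AL^j\|^2]$, then the global error bound \eqref{eq:globalbound} plus $L_{max}$-smoothness to turn the progress measure into the dual gap, exactly as in the last block of the paper's argument. Where you genuinely diverge is the descent decomposition. The paper inserts the exact-update point $\bar{\AL}^{j+1}=\AL^j+\bigl(T_{i(j)}(\bw^j,\alpha^j_{i(j)})-\alpha^j_{i(j)}\bigr)\be_{i(j)}$ and writes $D(\AL^j)-D(\AL^{j+1})\ge \frac{\|\bx_{i(j)}\|^2}{2}\,|T_{i(j)}(\bw^j,\alpha^j_{i(j)})-\alpha^j_{i(j)}|^2-\frac{L_{max}}{2}\,|T_{i(j)}(\bw^j,\alpha^j_{i(j)})-T_{i(j)}(\what^j,\alpha^j_{i(j)})|^2$, so the asynchrony penalty is \emph{quadratic} in the operator discrepancy and, via \eqref{eq:Tdiff_bound}, of size $\frac{\tau^2M^2e^2}{n}$ relative to the progress; this is where both the factor $L_{max}$ and the theorem's specific condition come from. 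You instead keep strong convexity of the \emph{stale} subproblem and carry the asynchrony in the \emph{bilinear} cross term $(\bw^j-\what^j)^T\bx_{i(j)}\,\Delta\alpha_j$, which is first order in the lag.

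That choice creates a concrete gap in your step 3. If the cross term is bounded the way you describe—$|\bar{\bx}_{i(j)}^T(\bw^j-\what^j)|\le M\sum_{t=j-\tau}^{j-1}|\Deltaalpha_t|$ followed by coordinate-wise use of Lemma~\ref{lm:rho_decrease}, i.e.\ $E[(\Deltaalpha_t)^2]\le\rho^{\,j-t}E[(\Deltaalpha_j)^2]$ with $\rho^\tau\le e^2$—then the expected penalty is of order $\tau e M\,E[(\Deltaalpha_j)^2]$ against a gain of $\frac{R_{min}}{2}E[(\Deltaalpha_j)^2]$. But $M\ge 1$ always (take $S=[d]$ in the definition of $M_i$: the $i$-th entry of $\sum_t\bar{X}_{:,t}X_{i,t}$ is $\bar{\bx}_i^T\bx_i=1$) and $R_{min}\le R_{max}=1$, so $\tau eM\ge e>\tfrac12\ge\tfrac{R_{min}}{2}$ for any $\tau\ge 1$: your constant $c$ is negative and the bound is vacuous, no matter how small \eqref{eq:condition_tau} forces $\tau M/\sqrt n$ to be. The missing idea is the extra $1/\sqrt{n}$: one must apply Cauchy--Schwarz \emph{across coordinates} (after taking $E_{i(j)}$, so that the full vector $\|T(\what^j,\AL^j)-\AL^j\|$ appears) and then invoke Preposition~\ref{prep:exp_normsq} to trade each lagged step for $\rho^{(j-t)/2}/\sqrt{n}$ times the current progress—precisely the device in \eqref{eq:GG} of the paper's proof of Lemma~\ref{lm:rho_decrease}. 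With that repair your route does yield a positive $c$ of order $\frac{R_{min}}{2}-\frac{\tau eM}{\sqrt n}$, hence \emph{a} linear rate, but note it is then a different theorem: the admissibility condition and the rate $\eta$ you obtain contain no $L_{max}$ in the descent constant. The penalty you wrote down, $\frac{L_{max}}{R_{min}^2}(1+\frac{e\tau M}{\sqrt n})(\frac{\tau^2M^2e^2}{n})$, cannot emerge from your decomposition—$L_{max}$ simply never enters your argument—so as written it is copied from the statement of Theorem~\ref{thm:converge_atomic} rather than derived, and the theorem's stated $\eta$ is not actually established by your proof.
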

\begin{table}[tb]
  \caption{Data statistics. $\tilde{n}$ is the number of test instances. 
  ${\bar d}$ is the average nnz per instance.}
  \label{tab:data-stat}
\centering
  \begin{tabular}{lrrrrr}
               &  $n$       & $\tilde{n}$ & $d$       &  ${\bar d}$  & $C$    \\
               \hline
               \hline
    \news      & 16,000     & 3,996       & 1,355,191 &  455.5       & 2      \\
    \covtype   & 500,000    & 81,012      & 54        &  11.9        & 0.0625 \\
    \rcv1      & 677,399    & 20,242      & 47,236    &  73.2        & 1      \\
    \webspam   & 280,000    & 70,000      & 16,609,143&  3727.7      & 1      \\
    \kddb      & 19,264,097 & 748,401     & 29,890,095&  29.4        & 1
  \end{tabular}
  \vspace{-10pt}
\end{table}

\subsection{Backward Error Analysis for \wild}
\label{sec:wild_properties}
In \wild, assume the sequence $\{\AL^j\}$ converges
to $\hat{\AL}$ and $\{\bw^j\}$ converges to $\hat{\bw}$.
Now we show that the dual solution $\hat{\AL}$ and the corresponding
primal variables $\bar{\bw}=\sum_{i=1}^n \hat{\alpha}_i \bx_i$ 
are actually the dual and primal solutions of a perturbed problem: 
\begin{theorem}
  $\hat{\AL}$ is the optimal solution of a perturbed dual problem
\vspace{-10pt}  \begin{equation}
        \hat{\AL} = \arg\min_{\AL}\ D(\AL) 
    - \sum_{i=1}^n \alpha_i \bepsilon^T \bx_i, 
\label{eq:dual_new}
  \end{equation}
  and $\bar{\bw}=\sum_i \hat{\alpha}_i \bx_i$ is the solution of the corresponding primal problem:
  \begin{equation}
    \bar{\bw} =  \arg\min_{\bw}\ \frac{1}{2} \bw^T \bw + \sum_{i=1}^n \ell_i( (\bw-\bepsilon)^T \bx_i ),  
    \label{eq:primal_new}
  \end{equation}
where $\bepsilon \in \R^d$ is given by
$  \bepsilon = \bar{\bw} - \what$.  
  \label{thm:dual_new}
\end{theorem}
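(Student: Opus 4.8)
The plan is to characterize the limit point $(\hat{\AL},\what)$ of \wild purely through the first-order optimality of the per-coordinate subproblems, and then to recognize those conditions as the KKT system of the perturbed pair \eqref{eq:dual_new}--\eqref{eq:primal_new}. First I would exploit that the updates vanish in the limit. Since every coordinate is selected infinitely often and $\Delta\alpha_j\to 0$, while by assumption the in-memory iterate $\what^j\to\what$ and $\AL^j\to\hat{\AL}$, the subproblem that defines each update (the exact minimization over $\delta$ using the current in-memory $\what^j$) has minimizer $\delta=0$ in the limit, for every index $i$. Writing the optimality condition of $\arg\min_\delta \frac{1}{2}\|\what+\delta\bx_i\|^2+\ell_i^*(-(\hat{\alpha}_i+\delta))$ at $\delta=0$ gives, for all $i$,
\[
  \what^T\bx_i \in \partial\ell_i^*(-\hat{\alpha}_i).
\]
This is the crucial bridge: the condition is expressed through the in-memory $\what$, \emph{not} through $\bar{\bw}=\sum_i\hat{\alpha}_i\bx_i$.

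Next I would write the coordinate-wise optimality condition for the perturbed dual \eqref{eq:dual_new}. Differentiating $D(\AL)-\sum_i\alpha_i\bepsilon^T\bx_i$ with respect to $\alpha_i$ at $\hat{\AL}$ yields $0\in \bar{\bw}^T\bx_i-\partial\ell_i^*(-\hat{\alpha}_i)-\bepsilon^T\bx_i$, i.e. $(\bar{\bw}-\bepsilon)^T\bx_i\in\partial\ell_i^*(-\hat{\alpha}_i)$. Substituting the definition $\bepsilon=\bar{\bw}-\what$ gives $\bar{\bw}-\bepsilon=\what$, so this condition is exactly the displayed stationarity condition above. Because $D$ plus a linear term is convex and its only nonsmooth part $\sum_i\ell_i^*(-\alpha_i)$ is separable, coordinate-wise stationarity is both necessary and sufficient for global optimality (a Tseng-type argument for separable nonsmooth-plus-smooth objectives). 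Hence $\hat{\AL}$ solves \eqref{eq:dual_new}.

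Finally, for the primal statement I would use Fenchel/conjugate duality to show that \eqref{eq:primal_new} is precisely the primal of \eqref{eq:dual_new}. Replacing $\bw^T\bx_i$ by $(\bw-\bepsilon)^T\bx_i$ shifts the argument of each loss by $c_i:=-\bepsilon^T\bx_i$; since the conjugate of $u\mapsto\ell_i(u+c_i)$ is $v\mapsto\ell_i^*(v)-c_iv$, the standard dual of the $\ell_2$-regularized primal (the derivation leading to \eqref{eq:dual}) turns into $D(\AL)+\sum_i c_i\alpha_i = D(\AL)-\sum_i\alpha_i\bepsilon^T\bx_i$, which is exactly \eqref{eq:dual_new}. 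The usual primal-dual map $\bw=\sum_i\alpha_i\bx_i$ of this Fenchel pair then identifies the optimal primal point as $\bar{\bw}=\sum_i\hat{\alpha}_i\bx_i$, establishing \eqref{eq:primal_new}.

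The main obstacle is the first step: rigorously justifying the vanishing-update / limiting-stationarity argument in the asynchronous, possibly nonsmooth setting. One must argue that $\what^j$ genuinely converges to the in-memory limit $\what$, that $\Delta\alpha_j\to 0$ for every coordinate (each is touched infinitely often under the random selection), and that the exact-minimization optimality condition passes to the limit when $\ell_i^*$ is nonsmooth (handling the subdifferential inclusion, e.g. via closedness of the graph of $\partial\ell_i^*$). Once this limiting condition is secured, the remainder is the purely algebraic identification of the two stationarity systems and the conjugate-shift computation.
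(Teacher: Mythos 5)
Your proposal is correct and follows essentially the same route as the paper: you pass the exact-minimization optimality condition to the limit to obtain $\what^T\bx_i\in\partial\ell_i^*(-\hat\alpha_i)$ for all $i$, substitute $\what=\bar\bw-\bepsilon$ to identify this with the (separable, hence sufficient) coordinate-wise optimality system of the perturbed dual \eqref{eq:dual_new}, and then show \eqref{eq:primal_new} and \eqref{eq:dual_new} form a primal--dual pair. The only cosmetic difference is that you derive this pairing via the conjugate-shift identity for $u\mapsto\ell_i(u+c_i)$, whereas the paper writes out the Lagrangian of the constrained reformulation $\xi_i=(\bw-\bepsilon)^T\bx_i$ explicitly --- the same computation in different clothing.
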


\begin{proof}
  By definition, $\hat{\AL}$ is the limit point of \wild. 
  Therefore, $\{\Delta\alpha_i\}\rightarrow 0$ for all $i$. 
  Combining with the fact that $\{\what^j\}\rightarrow \what$, 
  we have 
  \begin{equation*}
    -\what^T \bx_i \in \partial_{\alpha_i} \ell_i^*(-\hat\alpha_i),  \quad \forall i. 
  \end{equation*}
  Since $\what = \bar{\bw}- \bepsilon$, we have
  \begin{align*}
    -(\bar{\bw}-\bepsilon)^T \bx_i &\in \partial_{\alpha_i} \ell_i^*(-\hat\alpha_i),  \quad \forall i  \\
    -\bar{\bw}^T \bx_i &\in \partial_{\alpha_i} \left(\ell_i^*(-\hat\alpha_i) - \hat\alpha_i\bepsilon^T\bx_i\right),  \quad \forall i  \\
    0 &\in \partial_{\alpha_i} \left(\frac{1}{2}\|\sum_{i=1}^n \hat\alpha_i 
    \bx_i\|^2+\ell_i^*(-\hat\alpha_i) - 
    \hat\alpha_i\bepsilon^T\bx_i\right),\quad \forall i
  \end{align*}
  which is the optimality condition of \eqref{eq:dual_new}. 
  Thus, $\hat{\AL}$ is the optimal solution of \eqref{eq:dual_new}. 

  For the second part of the theorem, let's consider the following equivalent 
  primal problem and its Lagrangian:
  \begin{align*}
    \min_{\bw, \bxi}\ \frac{1}{2} \bw^T \bw + \sum_{i=1}^n \ell_i(\xi_i)\quad 
    \text{ s.t. } \xi_i = (\bw-\bepsilon)^T\bx_i\ \forall i=1,\dots,n \\
    L(\bw, \bxi, \AL) := \frac{1}{2} \bw^T \bw + \sum_{i=1}^n 
    \{\ell_i(\xi_i)+\alpha_i (  \xi_i - \bw^T\bx_i + \bepsilon^T\bx_i)\}
  \end{align*}
  The corresponding convex version of the dual function can be derived as follows.
  \begin{align*}
    \hat D(\AL) &= \max_{\bw,\bxi} -L(\bw,\bxi, \AL) \\
    & = \left(\max_{\bw} -\frac{1}{2}\bw^T\bw + \sum_{i=1}^{n} \alpha_i \bw^T\bx_i\right)
    +\sum_{i=1}^n \left(\max_{\xi_i}\ -\ell_i(\xi_i)-\alpha_i\xi_i\right)  - \alpha_i \bepsilon^T\bx_i\\
    &= \frac{1}{2} \|\sum_{i=1}^n \alpha_i \bx_i\|^2 + \sum_{i=1}^n 
    \ell_i^*(-\alpha_i) - \alpha_i\bepsilon^T\bx_i \\
    &= D(\AL) - \sum_{i=1}^{n}\alpha_i\bepsilon^T\bx_i
  \end{align*}
The last second equality comes from 1) the substitution of 
$\bw^*=\sum_{i=1}^T \alpha_i\bx_i$ obtained by setting 
$\nabla_{\bw} -L(\bw,\bxi,\AL)=0$; 2) the definition of the conjugate function 
$\ell_i^*(-\alpha_i)$. 
  Thus, the second part of the theorem follows. 
\end{proof}
Note that $\bepsilon$ is the error caused by the memory conflicts. 
From Theorem \ref{thm:dual_new}, 
$\bar{\bw}$ is the optimal solution of the ``biased''
primal problem \eqref{eq:primal_new}, however, in \eqref{eq:primal_new} the
actual model that fits the loss function should be $\hat{\bw}=\bar{\bw} - \bepsilon$. Therefore after the training process
we should use $\hat{\bw}$ to predict, which is the $\bw$ 
we maintained during the parallel coordinate descent updates.
Replacing $\bw$ by $\bw-\bepsilon$ in \eqref{eq:primal_new}, we have the 
following corollary : 
\begin{corollary}
  $\hat{\bw}$ computed by \wild is the solution of the following perturbed primal problem: 
  \vspace{-5pt}  \begin{equation}
    \hat{\bw} = \arg\min_{\bw} \frac{1}{2}(\bw+\bepsilon)^T (\bw+\bepsilon) + 
    \sum_{i=1}^n \ell_i(\bw^T \bx_i)
    \label{eq:hatw_argmin}
  \end{equation}
\end{corollary}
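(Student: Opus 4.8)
The plan is to derive this corollary directly from the second part of Theorem~\ref{thm:dual_new} by a single affine change of variables, so that no fresh analysis is needed. The substantive content — that $\bar{\bw}=\sum_i\hat{\alpha}_i\bx_i$ minimizes the biased primal objective in~\eqref{eq:primal_new} — is already in hand; all that remains is to reparametrize that objective so its minimizer is expressed as $\hat{\bw}=\bar{\bw}-\bepsilon$ rather than as $\bar{\bw}$. This is exactly the substitution ``replace $\bw$ by $\bw-\bepsilon$'' announced in the text preceding the statement.

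First I would recall from Theorem~\ref{thm:dual_new} that $\bar{\bw}$ solves
\begin{equation*}
  \min_{\bw}\ f(\bw):=\frac{1}{2}\bw^T\bw+\sum_{i=1}^n\ell_i\big((\bw-\bepsilon)^T\bx_i\big),
\end{equation*}
where $\bepsilon=\bar{\bw}-\hat{\bw}$ is a fixed vector determined by the converged iterates. Then I would introduce the shifted variable $\bv:=\bw-\bepsilon$, i.e. $\bw=\bv+\bepsilon$, and substitute into $f$ to obtain
\begin{equation*}
  f(\bv+\bepsilon)=\frac{1}{2}(\bv+\bepsilon)^T(\bv+\bepsilon)+\sum_{i=1}^n\ell_i\big(\bv^T\bx_i\big),
\end{equation*}
which is precisely the objective appearing on the right-hand side of~\eqref{eq:hatw_argmin}.

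Since $\bw\mapsto\bv=\bw-\bepsilon$ is a bijection of $\R^d$, the two minimization problems are equivalent and their solution sets differ only by the constant translation $\bepsilon$. As $\bar{\bw}$ minimizes $f(\bw)$, the minimizer of $f(\bv+\bepsilon)$ over $\bv$ is therefore $\bar{\bw}-\bepsilon$, which by the definition $\bepsilon=\bar{\bw}-\hat{\bw}$ equals $\hat{\bw}$. Hence $\hat{\bw}$ solves~\eqref{eq:hatw_argmin}, as claimed.

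There is no genuine analytic obstacle here; the proof is a one-line reparametrization. The only point requiring care is bookkeeping: $\bepsilon$ must be held fixed as a constant during the substitution — it depends on the limit points $\hat{\AL},\hat{\bw}$ and not on the optimization variable — so that the change of variables is a bona fide bijection under which the minimizer merely translates. (Strong convexity of the quadratic term in fact makes the minimizer unique, but uniqueness is not even needed, since the argument only asserts that the minimizer set is shifted by $\bepsilon$.)
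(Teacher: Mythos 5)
Your proposal is correct and is exactly the paper's argument: the paper proves this corollary with the single remark ``Replacing $\bw$ by $\bw-\bepsilon$ in \eqref{eq:primal_new}, we have the following corollary,'' which is precisely your change of variables $\bv = \bw - \bepsilon$ applied to the conclusion of Theorem~\ref{thm:dual_new}. Your write-up merely makes explicit the bookkeeping (that $\bepsilon$ is a fixed constant and the bijection translates the minimizer from $\bar{\bw}$ to $\bar{\bw}-\bepsilon=\hat{\bw}$) that the paper leaves implicit.
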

\vspace{-5pt}  The above corollary shows that the computed primal solution $\what$ is actually the {\em exact} solution
  of a perturbed problem (where the perturbation is
  on the regularizer). This strategy (of showing that the computed solution to a problem is the 
  exact solution of a perturbed problem) is inspired by the {\em backward error analysis} technique
  commonly employed in numerical analysis~\citep{JHW61a}\footnote{J. H. Wilkinson received the Turing Award in 1970, 
  partly for his work on backward error analysis}. 


\section{Experimental Results}
\label{sec:exp}
We conduct several experiments and show that the proposed \atomic 
and \wild have superior performance compared to other state-of-the-art 
parallel coordinate descent algorithms. We consider the hinge loss and five 
datasets: \news, \covtype, \rcv1, \webspam, and \kddb. Detailed information is 
shown in Table \ref{tab:data-stat}. To have a fair comparison, we implement 
all compared methods in C++ using OpenMP as the parallel programming 
framework. All the experiments are performed on an Intel multi-core dual-socket 
machine with 256 GB memory. Each socket is associated with 10 computation 
cores. We explicitly enforce that all the threads use cores from the same 
socket to avoid inter-socket communication. Our codes will be publicly available. 
We focus on solving the (hinge loss) SVM (see (5) in the Appendix) in the experiments,
but the algorithms can also be applied to other objective functions.
Note that some of the figures are in Appendix 6. 

{\bf Serial Baselines.}
\begin{compactitem}
  \item \sdcd: we implement Algorithm \ref{alg:sdcd}. Instead of sampling
    with replacement, a random permutation is used to enforce random sampling 
    without replacement. 
  \item \liblinear: we use the implementation in 
    \url{http://www.csie.ntu.edu.tw/~cjlin/liblinear}. This implementation is 
    equivalent to \sdcd with the shrinking strategy.
\end{compactitem}

{\bf Compared Parallel Implementation.}
\begin{compactitem}
  \item \asdcd: We implement the proposed three variants of Algorithm 
    \ref{alg:asdcd} using \sdcd as the building block: {\it Wild}, {\it Atomic}, 
    and {\it Lock}. 
  \item \cocoa: We implement a multi-core version of \cocoa~\citep{MJ14a} with 
    $\beta_K = 1$ and \sdcd as its local dual method.
  \item \ascd: We follow the description in~\citep{JL14a,JL14b} to implement 
    \ascd with the step length $\gamma=\frac{1}{2}$ and the shuffling period 
    $p=10$ as suggested in~\citep{JL14b}. 
\end{compactitem}

\subsection{Convergence in terms of iterations.}
The primal objective function value is 
used to determine the convergence. Note that we still use $P(\what)$ for 
\wild, although the true primal objective should be 
\eqref{eq:hatw_argmin}. As long as $\what^T\bepsilon$ remains small 
enough, the trend of \eqref{eq:hatw_argmin} and $P(\what)$ are similar.

Figure \ref{fig:conv-rcv1}, \ref{fig:conv-webspam}, \ref{fig:conv-kddb} show the 
convergence results of \wild, \atomic, \cocoa, and \ascd with 10 threads in 
terms of number of iterations.  The horizontal line in grey indicates the 
primal objective function value  
obtained by \liblinear using the default stopping condition. The result
for \liblinear is also included for reference. We have the follow observations 
\begin{compactitem}
\item Convergence of three 
  \asdcd variants are almost identical and very 
    close to the convergence behavior of serial \liblinear on three large 
    sparse datasets (\rcv1, \webspam, and \kddb). 
  \item \wild and \atomic converge significantly faster than \cocoa. 
  \item On \covtype, a more dense dataset, all three algorithms 
    (\wild, \atomic, and \cocoa) have slower convergence. 
\end{compactitem}

\subsection{Efficiency.}

{\bf Timing.}
To have a fair comparison, we include both initialization 
and computation into the timing results. For \sdcd, \asdcd, \cocoa, 
initialization takes one pass of entire data matrix (which is $O(nnz(X))$) to 
compute $\|\bx_i\|$ for each instance.  In the initialization stage, \ascd
requires $O(n\times nnz(X))$ time and $O(n^2)$ space to form and store the 
Hessian matrix $Q$ for \eqref{eq:dual}. Thus, we only have results on 
\news for \ascd as all other datasets are too large for \ascd to fit $Q$ in 
even 256 GB memory.  Note that we also parallelize the initialization part for 
each algorithm in our implementation to have a fair comparison. 

Figures \ref{fig:obj-news20}, \ref{fig:obj-covtype}, \ref{fig:obj-rcv1}, \ref{fig:obj-webspam}, \ref{fig:obj-kddb} show the primal objective values in terms of time and 
Figures \ref{fig:acc-news20}, \ref{fig:acc-covtype}, \ref{fig:acc-rcv1}, \ref{fig:acc-webspam}, \ref{fig:acc-kddb} shows the accuracy in terms of time. Note that the
x-axis for \news, \covtype, and \rcv1 is in log-scale. A horizontal line in 
gray in each figure denotes the objective values/accuracy obtained by 
\liblinear using the default stopping condition. We have the following 
observations:
\begin{compactitem}
\item From Figures \ref{fig:obj-rcv1} and \ref{fig:acc-rcv1}, we can 
  see that \ascd is orders of magnitude slower than other approaches including 
  parallel methods and serial reference (\ascd using 10 cores takes 0.4 seconds to run 10 iterations, while all the other parallel approaches
  takes less than 0.14 seconds, and \liblinear takes less than 0.3 seconds). 
  In fact, \ascd is still slower than 
  other methods even when the initialization time is excluded. This is expected 
  because \ascd is a parallel version of a standard coordinate descent method, which 
  is known to be much slower than \sdcd for \eqref{eq:dual}. Since \ascd runs out of memory
  for all the other larger datasets, we do not show the results in other figures. 
\item In most figures, both \asdcd approaches outperform \cocoa. 
  In Figure 
  \ref{fig:acc-kddb}, \kddb shows better accuracy performance in the 
  early stage which can be explained by the ensemble nature of \cocoa. In the 
  long term, it still converges to the accuracy obtained by \liblinear. 
\item For all datasets, \wild is shown to be slightly faster than \atomic. 
  Given the fact that both methods show similar convergence in terms of 
  iterations, this phenomenon can be explained by the effect of atomic 
  operations. We can observe that more dense the dataset, larger the difference 
  between \wild and \atomic. 
\end{compactitem}
\vspace{-3pt}
\subsection{Speedup}
\vspace{-3pt}
We are interested in the following evaluation criterion:
\[
  \text{speedup} := \frac{\text{time taken by the target method with $p$ threads}}
  {\text{time taken by the best serial reference method}},
\]
This criterion is different from {\em scaling}, where the denominator is 
replaced by ``time taken for the target method with single thread.'' Note that 
a method can have perfect scaling but very poor speedup. 
Figures \ref{fig:speedup-news20}, \ref{fig:speedup-covtype}, \ref{fig:speedup-rcv1}, \ref{fig:speedup-webspam}, \ref{fig:speedup-kddb} shows the speedup results, where 1) \sdcd is used as the best serial reference; 2)  
the shrinking heuristic is turned off for all \asdcd and \sdcd to have fair 
comparison; 3) the initialization time is excluded from the computation of 
speedup.  

\begin{compactitem}
\item \wild has very good speedup performance compared to other approaches. It 
  achieves about 6 to 8 speedups using 10 threads on all the datasets. 
\item From Figure \ref{fig:speedup-news20}, we can see that \ascd does not have 
  any ``speedup'' over the serial reference, although it is shown to have 
  almost linear scaling~\citep{JL14b,JL14a}. 
\end{compactitem}

\begin{figure*}[h]
  \centering
  \hspace{-3.3em}
  \begin{minipage}{0.38\linewidth}
    \begin{tabular}{c}
      \vspace{-0.8em}
      \subfigure[Convergence]{
        \includegraphics[width=\linewidth]{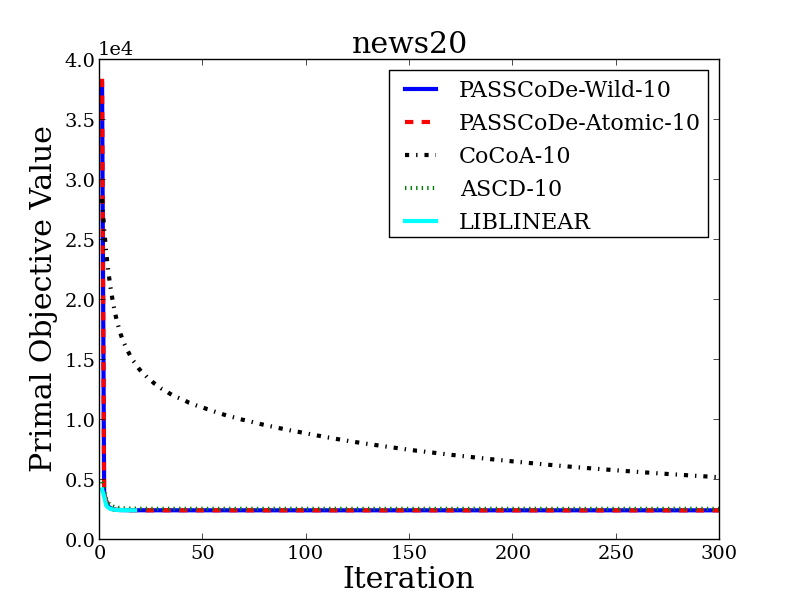}
        \label{fig:conv-news20}
      } \\
      \vspace{-0.8em}
      \subfigure[Objective]{
        \includegraphics[width=\linewidth]{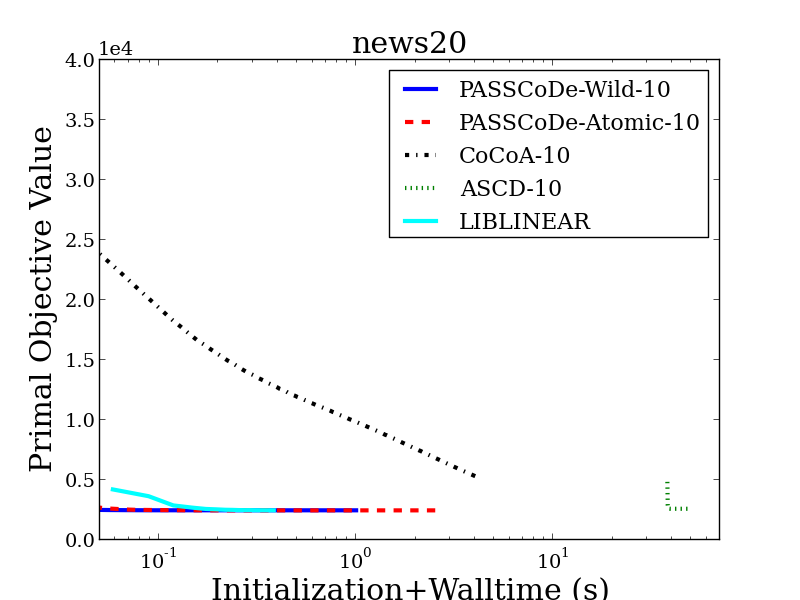}
        \label{fig:obj-news20}
      } \\
      \vspace{-0.8em}
      \subfigure[Accuracy]{
        \includegraphics[width=\linewidth]{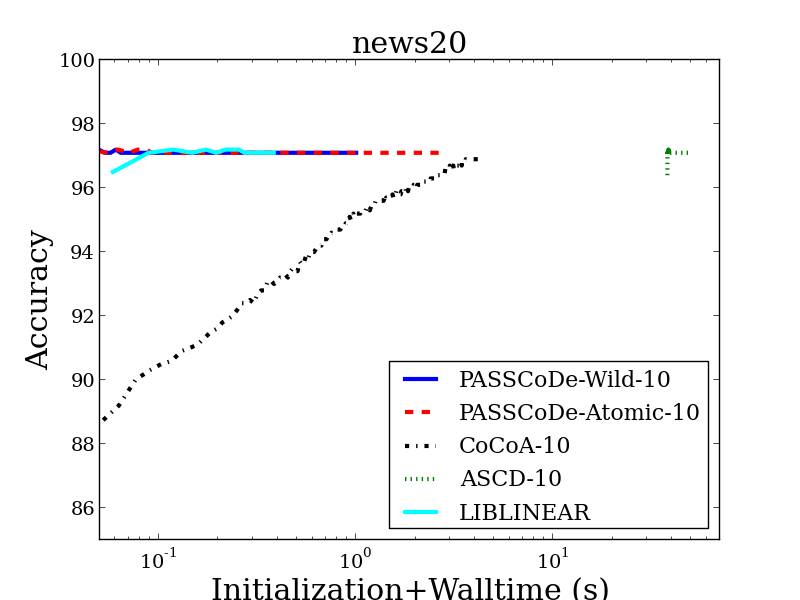}
        \label{fig:acc-news20}
      } \\
      \vspace{-0.8em}
      \subfigure[Speedup]{
        \includegraphics[width=\linewidth]{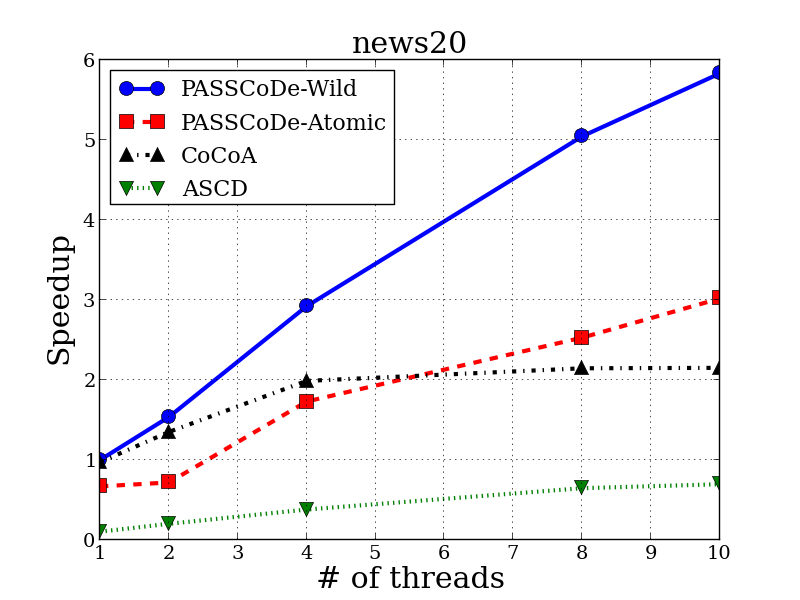}
        \label{fig:speedup-news20}
      } 
    \end{tabular}
    \caption{\news dataset}
    \label{fig:conv}
  \end{minipage}
  \hspace{-2em}
  \begin{minipage}{0.38\linewidth}
    \begin{tabular}{c}
      \vspace{-0.8em}
      \subfigure[Convergence]{
        \includegraphics[width=\linewidth]{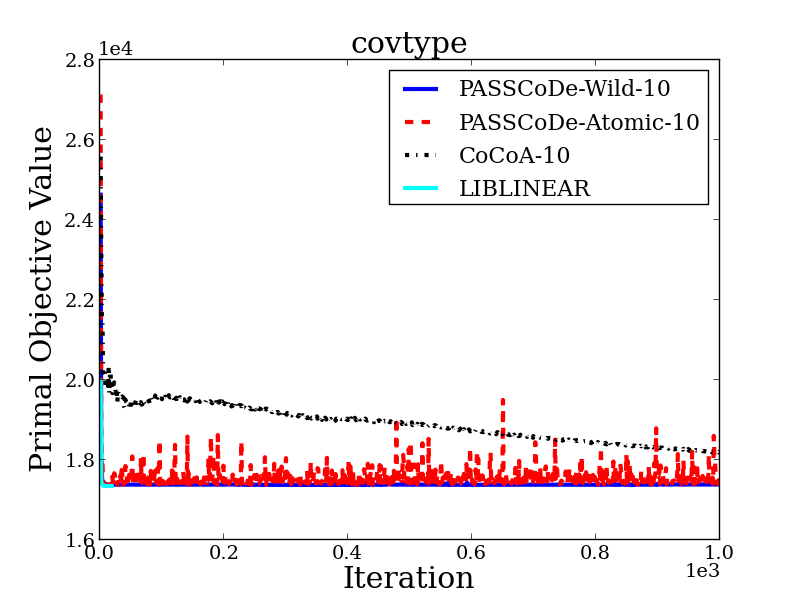}
        \label{fig:conv-covtype}
      } \\
      \vspace{-0.8em}
      \subfigure[Objective]{
        \includegraphics[width=\linewidth]{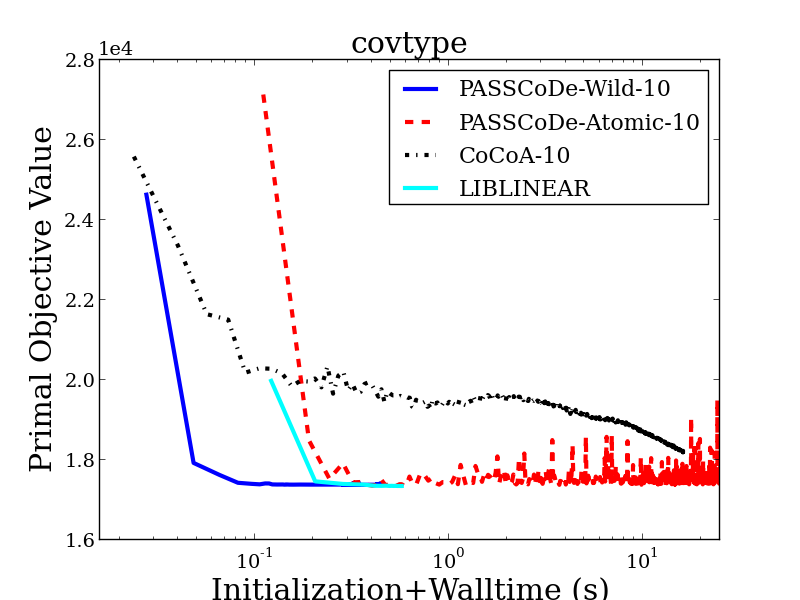}
        \label{fig:obj-covtype}
      } \\
      \vspace{-0.8em}
      \subfigure[Accuracy]{
        \includegraphics[width=\linewidth]{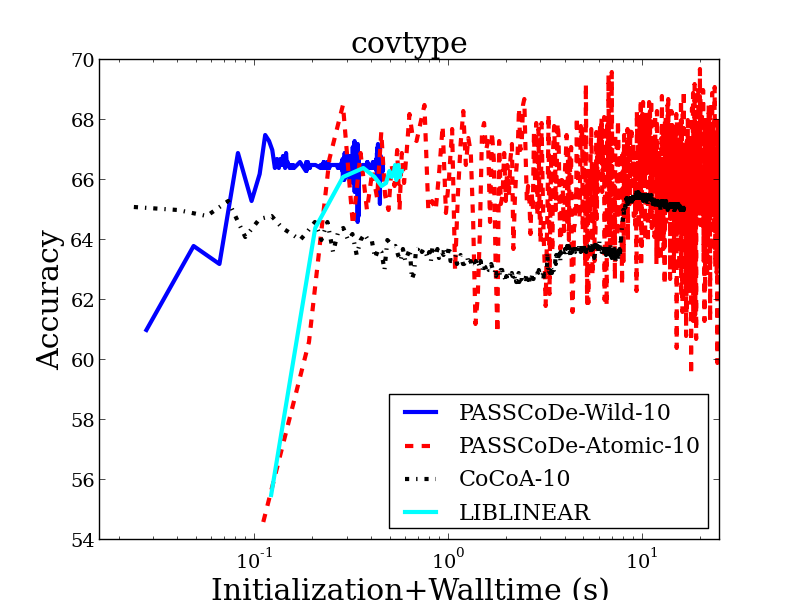}
        \label{fig:acc-covtype}
      } \\
      \vspace{-0.8em}
      \subfigure[Speedup]{
        \includegraphics[width=\linewidth]{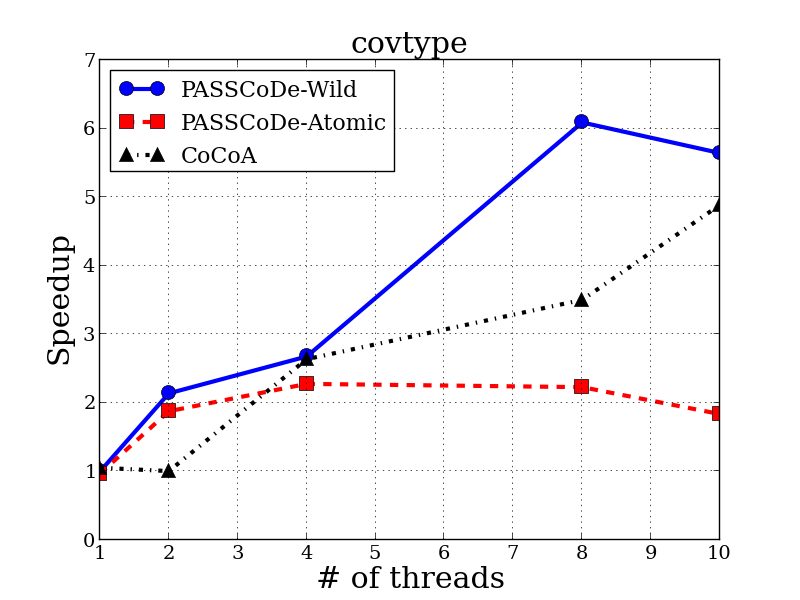}
        \label{fig:speedup-covtype}
      } 
    \end{tabular}
    \caption{\covtype dataset}
    \label{fig:obj}
  \end{minipage}
  \hspace{-2em}
  \begin{minipage}{0.38\linewidth}
    \begin{tabular}{c}
      \vspace{-0.8em}
      \subfigure[Convergence]{
        \includegraphics[width=\linewidth]{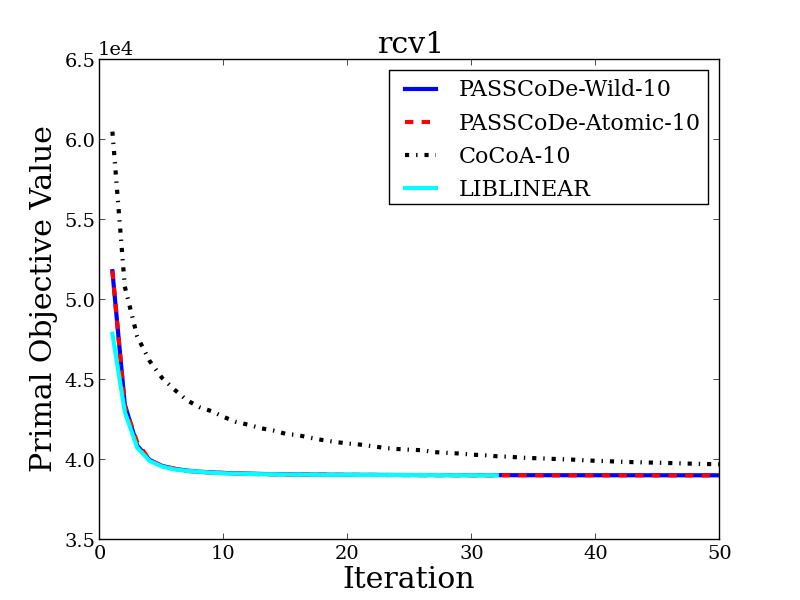}
        \label{fig:conv-rcv1}
      } \\
      \vspace{-0.8em}
      \subfigure[Objective]{
        \includegraphics[width=\linewidth]{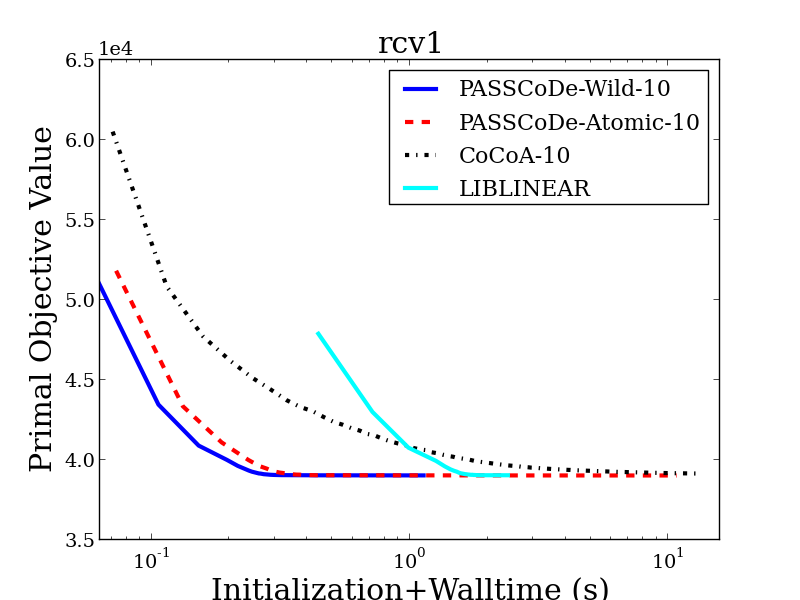}
        \label{fig:obj-rcv1}
      } \\
      \vspace{-0.8em}
      \subfigure[Accuracy]{
        \includegraphics[width=\linewidth]{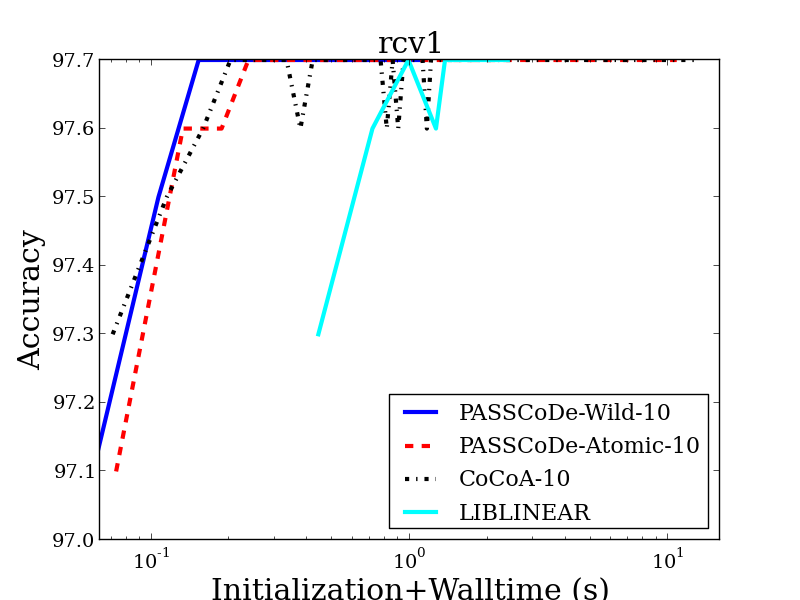}
        \label{fig:acc-rcv1}
      } \\
      \vspace{-0.8em}
      \subfigure[Speedup]{
        \includegraphics[width=\linewidth]{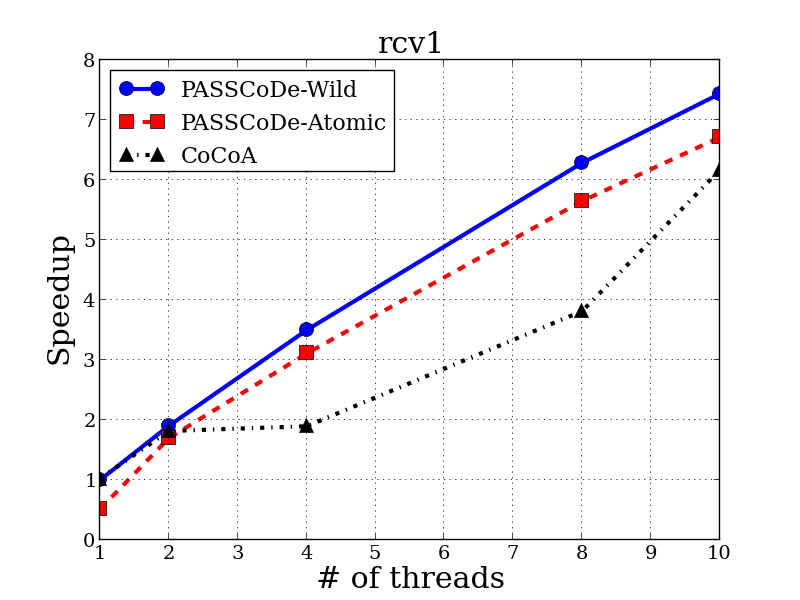}
        \label{fig:speedup-rcv1}
      } 
    \end{tabular}
    \caption{\rcv1 dataset}
    \label{fig:conv}
  \end{minipage}
\end{figure*}

\begin{figure*}[h]
  \centering
  \begin{minipage}{0.38\linewidth}
    \begin{tabular}{c}
      \vspace{-0.8em}
      \subfigure[Convergence]{
        \includegraphics[width=\linewidth]{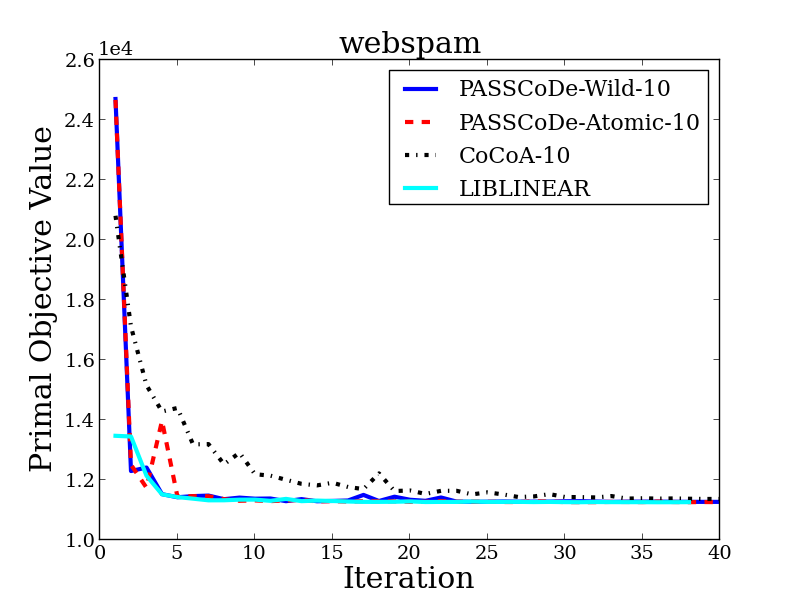}
        \label{fig:conv-webspam}
      } \\
      \vspace{-0.8em}
      \subfigure[Objective]{
        \includegraphics[width=\linewidth]{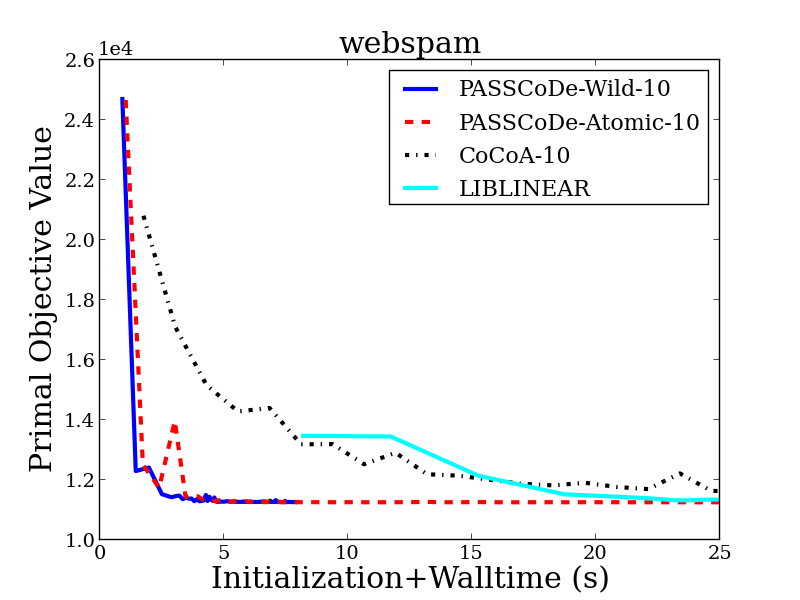}
        \label{fig:obj-webspam}
      } \\
      \vspace{-0.8em}
      \subfigure[Accuracy]{
        \includegraphics[width=\linewidth]{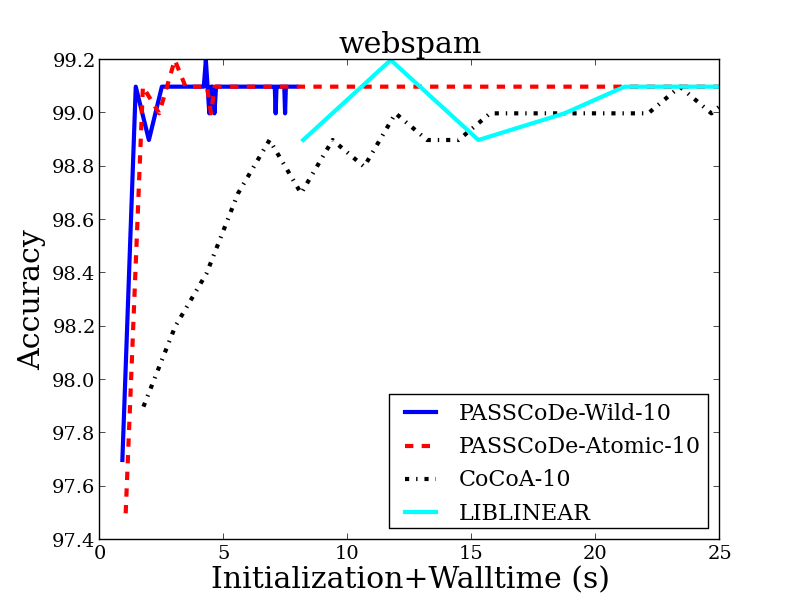}
        \label{fig:acc-webspam}
      } \\
      \vspace{-0.8em}
      \subfigure[Speedup]{
        \includegraphics[width=\linewidth]{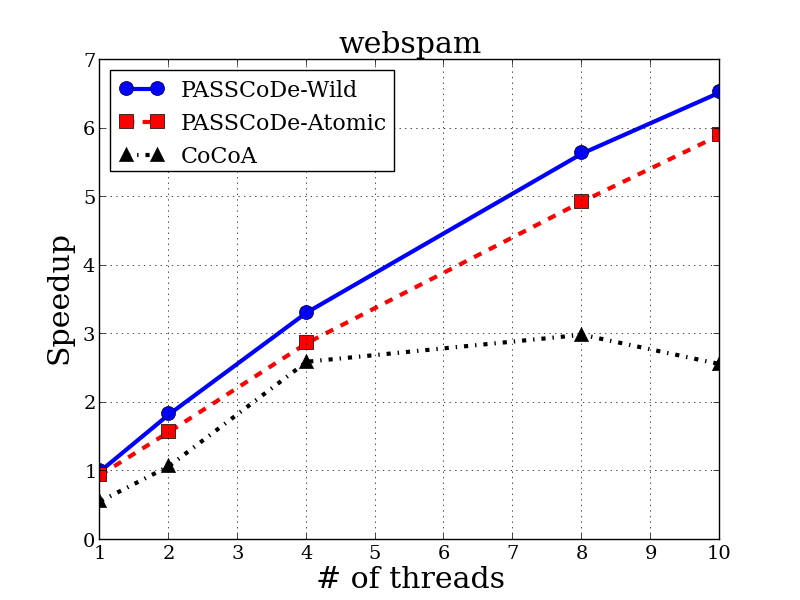}
        \label{fig:speedup-webspam}
      } 
    \end{tabular}
    \caption{\webspam dataset}
    \label{fig:obj}
  \end{minipage}
  \begin{minipage}{0.38\linewidth}
    \begin{tabular}{c}
      \vspace{-0.8em}
      \subfigure[Convergence]{
        \includegraphics[width=\linewidth]{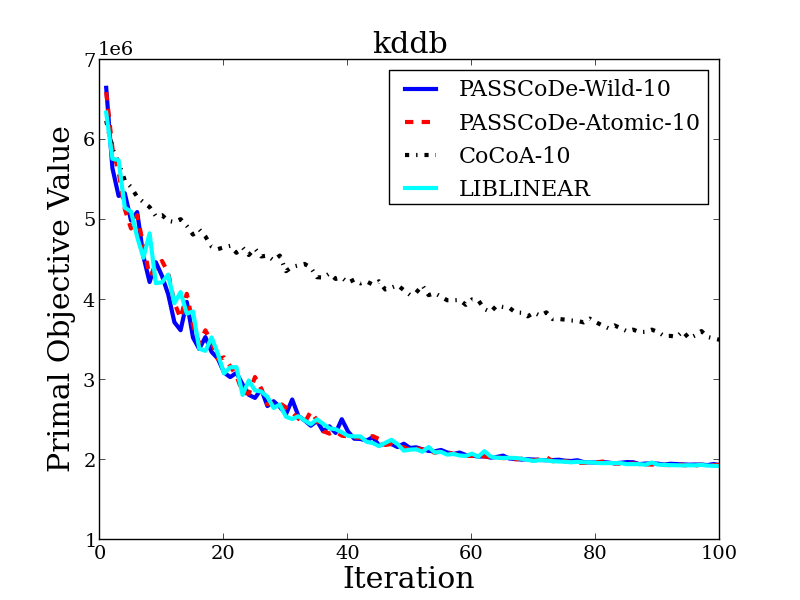}
        \label{fig:conv-kddb}
      } \\
      \vspace{-0.8em}
      \subfigure[Objective]{
        \includegraphics[width=\linewidth]{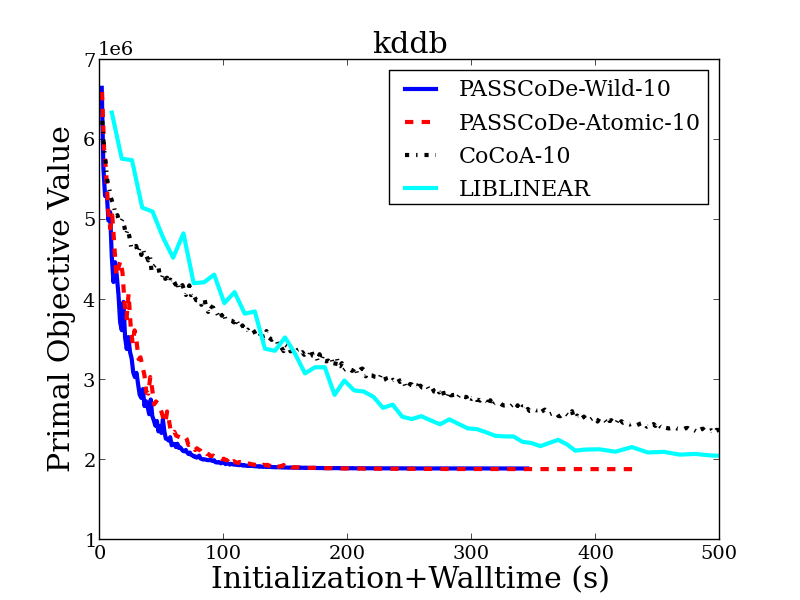}
        \label{fig:obj-kddb}
      } \\
      \vspace{-0.8em}
      \subfigure[Accuracy]{
        \includegraphics[width=\linewidth]{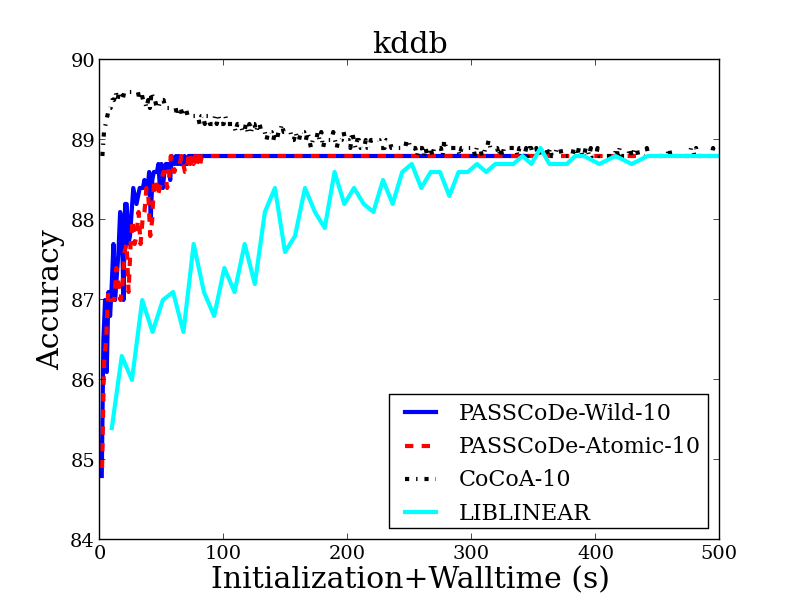}
        \label{fig:acc-kddb}
      }\\ 
      \vspace{-0.8em}
      \subfigure[Speedup]{
        \includegraphics[width=\linewidth]{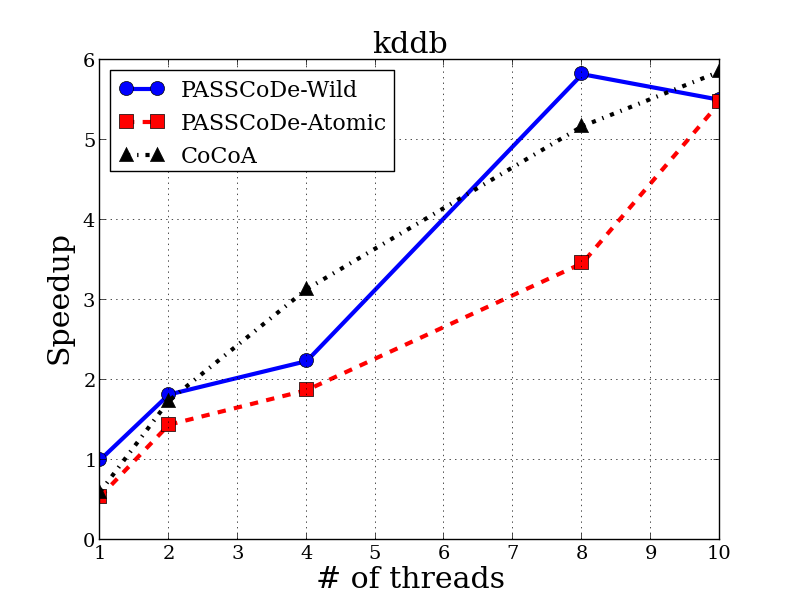}
        \label{fig:speedup-kddb}
      } 
    \end{tabular}
    \caption{\kddb dataset}
    \label{fig:acc}
  \end{minipage}
\end{figure*}

\section{Conclusions}

In this paper, we present a family of parallel asynchronous stochastic dual coordinate descent algorithms in the
shared memory multi-core setting, where each thread repeatedly selects a random dual variable and conducts
coordinate updates using the primal variables that are stored in the shared memory. We analyze
the convergence properties when different locking/atomic mechanism is used. For the setting
with atomic updates, we show the linear convergence under certain condition. 
For the setting without any lock or atomic write, which achieves the best speed up, we present a backward error analysis to show
that the primal variables obtained by the algorithm is the exact solution for a primal problem with perturbed regularizer. 
Experimental results show that our algorithms are much faster than previous parallel coordinate descent solvers. 


\clearpage

\bibliography{sdp}
\bibliographystyle{icml2015}
\clearpage
\appendix 
\section{Linear Convergence for \atomic }
\subsection{Notations and Prepositions}
\subsubsection{Notations}
\begin{itemize}
  \item For all $i=1,\ldots,n$, we have the following definitions:
    \begin{align*}
      h_i(u) :=& \frac{\ell^*_i(-u)}{\|\bx_i\|^2} \\
    \prox_i(s) :=& \arg\min_{u}\ \frac{1}{2}(u-s)^2 + h_i(u) \\
      T_i(\bw,s) :=& \arg\min_{u}\ \frac{1}{2} \|\bw + (u-s)\bx_i\|^2 + \ell^*_i(-u)\\
      =& \arg\min_{u}\ \frac{1}{2} \left[u - (s - \frac{\bw^T\bx_i}{\|\bx_i\|^2})\right]^2 + h_i(u), \\
    \end{align*}
    where $\bw \in R^{d}$ and $s\in R$. 
    We also denote $\prox(\bs)$ as the proximal operator from $R^n$ to $R^n$ such that 
    $(\prox(\bx))_i = \prox_i(s_i)$.   We can see the connection of the above 
    operator and the proximal operator: 
    $T_i(\bw,s) = \prox_{i}(s - \frac{\bw^T\bx_i}{\|\bx_i\|^2})$.
  \item Let $\{\AL^j\}$ and $\{\hat\bw^j\}$ be the sequence generated/maintained 
    by Algorithm \ref{alg:asdcd} using  
    \begin{align*}
      \alpha_t^{j+1} &= 
      \begin{cases}
        T_t(\hat\bw^j, \alpha_t^j) & \text{ if } t = i(j), \\
                       \alpha_t^j  & \text{ if } t \neq i(j),
      \end{cases}
    \end{align*}
    where $i(j)$ is the index selected at $j$-th iteration.  For convenience, 
    we define 
    \[
    \Delta\alpha_j = \alpha^{j+1}_{i(j)} - \alpha^j_{i(j)}.
    \]
  \item Let $\{\ALt^j\}$ be the sequence defined by 
    \begin{align*}
      \alphat_t^{j+1} &= T_t(\hat\bw^j, \alpha_t^j)\quad \forall t=1,\ldots,n.
    \end{align*}
    Note that $\alphat_{i(j)}^{j+1} = \alpha_{i(t)}^{j+1}$ and 
    $\ALt^{j+1} = \prox(\AL^j - \bar X \what^j)$.   
  \item Let $\bar{\bw}^j = \sum_i \alpha^j_i \bx_i$ be the ``true'' primal  
    variables corresponding to $\AL^j$.  
\end{itemize}
\subsubsection{Prepositions}
\begin{prep}
  \label{prep:exp_normsq}
  \begin{equation}
    E_{i(j)}(\|\AL^{j+1} - \AL^{j}\|^2) = \frac{1}{n} 
    \|\ALt^{j+1}-\AL^{j}\|^2. 
    \label{eq:exp_normsq}
  \end{equation}
\end{prep}
\begin{proof}
It can be proved by the definition of $\ALt$ and the assumption that $i(j)$ is 
uniformly random selected from $\{1,\ldots,n\}$.  
\end{proof}
\begin{prep}
  \begin{align}
    \|\bar X \bar\bw^j - \bar X \what^j\| \le M \sum^{j-1}_{t=j-\tau} |\Delta \alpha_t|.
    \label{eq:stale}
  \end{align}
  \label{prep:Xw_diff}
\end{prep}
\begin{proof}
  \begin{align*}
  \|\bar{X}\bar\bw^j - \bar{X}\what^j\| &= \|\bar{X}(\sum_{(t,k)\in \Z^j\setminus \U^j} (\Delta\alpha_t) X_{i(t),k}\be_k)\| 
 = \|\sum_{(t,k)\in \Z^j \setminus \U^j} (\Delta\alpha_t)\bar{X}_{:,k}X_{i(t),k}\|\\
  &\leq \sum_{t=j-1}^{j-\tau} |\Delta\alpha_t| M_i \leq M\sum^{j-1}_{t=j-\tau} |\Delta\alpha_t|
\end{align*}
\end{proof}
\begin{prep}
  For any $\bw_1,\bw_2 \in R^d$ and $s_1, s_2 \in R$,
  \label{prep:nonexp}
    \begin{align}
      |T_i(\bw_1, s_1) - T_i(\bw_2, s_2)| &\le |s_1 - s_2 + 
      \frac{(\bw_1-\bw_2)^T\bx_i}{\|\bx_i\|^2}|.
      \label{eq:nonexp}
    \end{align}
\end{prep}
\begin{proof}
It can be proved by the connection of $T_i(\bw,s)$ and $\prox_i(\cdot)$ and the non-expansiveness of the 
  proximal operator.
\end{proof}

\begin{prep}
  \label{prep:lagcond}
  Let $M\ge1$, $q=\frac{6(\tau+1)eM}{\sqrt{n}}$, $\rho = (1+q)^2$, and $\theta=\sum_{t=1}^\tau \rho^{t/2}$.  
  If $M\ge 1$ and $q(\tau+1)\le 1$, then $\rho^{(\tau+1)/2} \le e$, and
  \begin{align}
    \rho^{-1} \le 1 - \frac{4+4M+4M\theta}{\sqrt{n}}.
    \label{eq:lagcond}
  \end{align}
\end{prep}
\begin{proof}
  By the definition of $\rho$ and the condition $q(\tau+1)\le 1$, we have
  \[
  \rho^{(\tau+1)/2} = \left(\left(\rho^{1/2}\right)^{1/q}\right)^{q(\tau+1)} = 
  \left((1+q)^{1/q}\right)^{q(\tau+1)} \le e^{q(\tau+1)} \le e.
  \]
  By the definitions of $q$, we know that 
  \[
  q = \rho^{1/2} - 1 = \frac{6(\tau+1)eM}{\sqrt{n}} \Rightarrow \frac{3}{2} =  
  \frac{\sqrt{n}(\rho^{1/2}-1)}{4(\tau+1)eM}. 
  \]
  We can derive 
  \begin{align*}
    \frac{3}{2} &= \frac{\sqrt{n}(\rho^{1/2}-1)}{4(\tau+1)eM}\\
                &\le \frac{\sqrt{n}(\rho^{1/2}-1)}{4(\tau+1)\rho^{(\tau+1)/2}M} 
                \quad\quad\quad \because \rho^{(\tau+1)/2}\le e \\
                &\le \frac{\sqrt{n}(\rho^{1/2}-1)}{4(1+\theta)\rho^{1/2}M} 
                \quad\quad\quad\quad\quad \because 1+\theta=\sum_{t=0}^\tau \rho^{t/2} \le (\tau+1)\rho^{\tau/2} \\
                &= \frac{\sqrt{n}(1-\rho^{-1/2})}{4(1+\theta)M} \\
                &\le \frac{\sqrt{n}(1-\rho^{-1})}{4(1+\theta)M}
                \quad\quad\quad\quad\quad\quad \because \rho^{-1/2} \le 1
  \end{align*}
  Combining the condition that $M\ge 1$ and $1+\theta\ge 1$, we have
  \begin{align*}
    \frac{\sqrt{n}(1-\rho^{-1}) - 4}{4(1+\theta)M} \ge 
    \frac{\sqrt{n}(1-\rho^{-1})}{4(1+\theta)M} - \frac{1}{2}\ge 1,
  \end{align*}
  which leads to 
  \begin{align*}
    4(1+\theta)M &\le \sqrt{n} - \sqrt{n}\rho^{-1} - 4 \\
    \rho^{-1} &\le 1 - \frac{4+4M+4M\theta}{\sqrt{n}}.
  \end{align*}
\end{proof}

\subsection{Proof of Lemma \ref{lm:rho_decrease}}
\label{app:lemma}
Similar to \cite{JL14a}, we prove Eq. \eqref{eq:rho_decrease} by induction. First,
we know that for any two vectors $\ba$ and $\bb$, we have 
\begin{align*}
  \|\ba\|^2 - \|\bb\|^2 \le 2 \|\ba\|\|\bb-\ba\|. 
\end{align*}
See \cite{JL14a} for a proof for the above inequality. Thus, for all $j$ , we have 
\begin{equation}
  \|\AL^{j-1}-\ALt^j\|^2 - \|\AL^j - \ALt^{j+1}\|^2 
  \leq 2 \|\AL^{j-1} - \ALt^j\| \|\AL^{j} - \ALt^{j+1} - \AL^{j-1} + \ALt^{j}\|. \label{eq:NN} 
\end{equation}
The second factor in the r.h.s of \eqref{eq:NN} is bounded as follows:
\begin{align}
  &\|\AL^j - \ALt^{j+1} - \AL^{j-1}+\ALt^j\| \nonumber \\
  &\leq \|\AL^j - \AL^{j-1}\| + \|\prox(\AL^j-\bar{X}\what^j) - \prox(\AL^{j-1}-\bar{X}\what^{j-1})\| \nonumber\\
  &\leq \|\AL^j - \AL^{j-1}\| + \|(\AL^j-\bar{X}\what^j) - (\AL^{j-1}-\bar{X}\what^{j-1})\| \nonumber\\
  &\leq \|\AL^j - \AL^{j-1}\| + \|\AL^j - \AL^{j-1}\| + \|\bar{X}\what^j - \bar{X} \what^{j-1}\| \nonumber \\
  &= 2 \|\AL^j - \AL^{j-1}\| + \|\bar{X}\what^j - \bar{X} \what^{j-1}\| \nonumber \\
  &= 2\|\AL^j - \AL^{j-1}\| + \|\bar{X}\what^j - \bar{X}\wbar^j + \bar{X}\wbar^j - \bar{X}\wbar^{j-1} + \bar{X}\wbar^{j-1}-\bar{X}\what^{j-1}\| \nonumber\\
  &\leq 2 \|\AL^j - \AL^{j-1}\| + \|\bar{X}\wbar^j - \bar{X}\wbar^{j-1}\| + \|\bar{X}\what^j - \bar{X}\wbar^j\| + \|\bar{X}\wbar^{j-1}-\bar{X}\what^{j-1}\| \nonumber\\
  &\leq (2+M)\|\AL^j - \AL^{j-1}\| + \sum_{t=j-\tau}^{j-1}\|\Deltaalpha_t \| M + \sum_{t=j-\tau-1}^{j-2} \|\Deltaalpha_t\| M \nonumber\\
  &= (2+2M) \|\AL^j - \AL^{j-1}\| + 2M\sum_{t=j-\tau-1}^{j-2} \|\Deltaalpha_t\| \label{eq:MM}
\end{align}
Now we prove \eqref{eq:rho_decrease} by induction. 

{\bf Induction Hypothesis. }
Due to Preposition \ref{prep:exp_normsq}, we prove the following equivalent 
statement. For all $j$, 
\begin{equation}
  E(\|\AL^{j-1}-\ALt^{j}\|^2) \leq \rho E(\|\AL^j - \ALt^{j+1}\|^2), 
  \label{eq:rho_decrease_2}
\end{equation}

{\bf Induction Basis.} When $j=1$, 
\begin{equation*}
  \|\AL^1 - \ALt^2 + \AL^0 - \ALt^1\| \leq (2+2M) \|\AL^1 - \AL^{0}\|. 
\end{equation*}
By taking the expectation on \eqref{eq:NN}, we have
\begin{align*}
  E[\|\AL^0-\ALt^1\|^2] - E[\|\AL^1 - \ALt^2\|^2]  &\leq 2 E[\|\AL^0 - \ALt^1\|\|\AL^1 - \ALt^2 - \AL^0 + \ALt^1\|] \\
  & \leq (4+4M) E(\|\AL^0 - \ALt^1\|\|\AL^0 - \AL^1\|). 
\end{align*}
From \eqref{eq:exp_normsq}  we have $E[\|\AL^0 - \AL^1\|^2]= \frac{1}{n}\|\AL^0 - \ALt^1\|^2$. 
Also, by AM-GM inequality, for any $\mu_1, \mu_2>0$ and any $c>0$, we have 
\begin{equation}
  \mu_1\mu_2 \leq \frac{1}{2} (c \mu_1^2 + c^{-1}\mu_2^2). 
  \label{eq:mu1mu2}
\end{equation}
Therefore, we have
\begin{align*}
  &E[\|\AL^0 - \ALt^1\|\|\AL^0 - \AL^1\|] \\
  &\leq \frac{1}{2} E\Bigl[  n^{1/2} \|\AL^0 - \AL^1\|^2 + n^{-1/2} \|\ALt^1 - \AL^0\|^2 \Bigl] \\
  &= \frac{1}{2} E\Bigl[ n^{-1/2} \|\AL^0 - \ALt^1\|^2 + n^{-1/2} \|\ALt^1 - 
  \AL^0\|^2  \Bigl] \quad\quad\quad \text{by \eqref{eq:exp_normsq}}\\
  &= {n}^{-1/2}E[\|\AL^0 - \ALt^1\|^2]. 
\end{align*}
Therefore, 
\begin{equation*}
  E[\|\AL^0 - \ALt^1\|^2]  - E[\|\AL^1 - \ALt^2\|^2] \leq
  \frac{4+4M}{\sqrt{n}} E[\|\AL^0 - \ALt^1\|^2], 
\end{equation*}
which implies
\begin{equation}
  E[\|\AL^0 - \AL^1\|^2] \leq \frac{1}{1-\frac{4+4M}{\sqrt{n}}} E[\|\AL^1 - \ALt^2\|^2]
  \le \rho E[\|\AL^1 - \ALt^2\|^2],
  \label{eq:gg58}
\end{equation}
where the last inequality is based on Preposition \ref{prep:lagcond} and the 
fact $\theta M \ge 1$.    

{\bf Induction Step. } By the induction hypothesis, we assume 
\begin{equation}
  E[\|\AL^{t-1}-\ALt^{t}\|^2] \leq \rho E[\|\AL^t - \ALt^{t+1}\|^2] \quad \forall t\leq j-1. 
  \label{eq:ind}
\end{equation}
The goal is to show 
\begin{equation*}
  E[\|\AL^{j-1}-\ALt^j\|^2] \leq \rho E[\|\AL^{j}-\ALt^{j+1}\|^2]. 
\end{equation*}
First, we show that for all $t < j$, 
\begin{equation}
E\Bigl[ \|\AL^t - \AL^{t+1}\|\|\AL^{j-1} - \ALt^{j}\|  \Big]  \le 
\frac{\rho^{(j-1-t)/2}}{\sqrt{n}} E\Bigl[ \|\AL^{j-1}- \ALt^{j} \|^2\Bigl]
\label{eq:GG}
\end{equation}
\begin{proof}
  By \eqref{eq:mu1mu2} with $c=n^{1/2}\beta$, where $\beta=\rho^{(t+1-j)/2}$, 
\begin{align*}
  &E\Bigl[ \|\AL^t - \AL^{t+1}\|\|\AL^{j-1} - \ALt^{j}\|  \Big] \\
  &\leq \frac{1}{2} E\Bigl[ n^{1/2} \beta \|\AL^t - \AL^{t+1}\|^2 + n^{-1/2} \beta^{-1} \|\AL^{j-1} - \ALt^{j}\|^2 \Bigl] \\
  &=\frac{1}{2} E\Bigl[ n^{1/2} \beta E[\|\AL^t - \AL^{t+1}\|^2] + n^{-1/2} \beta^{-1} \|\AL^{j-1} - \ALt^{j}\|^2 \Bigl] \\
  &=\frac{1}{2} E\Bigl[ n^{-1/2} \beta \|\AL^t - \ALt^{t+1}\|^2 + n^{-1/2} \beta^{-1} \|\AL^{j-1} - \ALt^{j}\|^2 \Bigl] 
  \quad\quad \text{by Preposition \ref{prep:exp_normsq}} \\
  &\le \frac{1}{2} E\Bigl[ n^{-1/2} \beta \rho^{j-1-t}\|\AL^{j-1} - \ALt^{j}\|^2 + n^{-1/2} \beta^{-1} \|\AL^{j-1} - \ALt^{j}\|^2 \Bigl] 
  \quad\quad \text{by Eq. \eqref{eq:ind}} \\
  &\le \frac{1}{2} E\Bigl[ n^{-1/2} \beta^{-1}\|\AL^{j-1} - \ALt^{j}\|^2 + n^{-1/2} \beta^{-1} \|\ALt^{j-1} - \AL^{j}\|^2 \Bigl] 
  \quad\quad \text{by the definition of $\beta$} \\
    &\le \frac{\rho^{(j-1-t)/2}}{\sqrt{n}} E\Bigl[ \|\AL^{j-1} - \ALt^{j}\|^2\Bigl]
\end{align*}
\end{proof}
Let $\theta=\sum_{t=1}^{\tau} \rho^{t/2}$. 
We have
\begin{align*}
  &E[\|\AL^{j-1}-\ALt^j\|^2] - E[ \|\AL^{j}-\ALt^{j+1}\|^2 ] \\ 
  &\leq E\Bigl[ 2\|\AL^{j-1}-\ALt^j\|\bigl( (2+2M)\|\AL^j-\AL^{j-1}\|+2M\sum_{t=j-\tau-1}^{j-1}\|\AL^t - \AL^{t-1}\|  \bigl)  \Bigl] 
  \quad\text{by \eqref{eq:NN}, \eqref{eq:MM}}
  \\
  &= (4+4M)E(\|\AL^{j-1}-\ALt^j\|\|\AL^j - \AL^{j-1}\|) + 4M\sum_{t=j-\tau-1}^{j-1} E\Bigl[ \|\AL^{j-1}-\ALt^j\|\|\AL^t - \AL^{t-1}\|  \Bigl] \\
  &\leq (4+4M)n^{-1/2}E[\|\ALt^j - \AL^{j-1}\|^2] + 4M n^{-1/2} E[\|\AL^{j-1}-\ALt^j\|^2]\sum_{t=j-1-\tau}^{j-2} \rho^{(j-1-t)/2} 
  \quad\text{by \eqref{eq:GG}}\\
  &\leq (4+4M)n^{-1/2}E[\|\ALt^j - \AL^{j-1}\|^2] + 4Mn^{-1/2}\theta E[\|\AL^{j-1}-\ALt^j\|^2]\\
  &\leq \frac{4+4M+4M\theta}{\sqrt{n}} E[\|\AL^{j-1}-\ALt^j\|^2], 
\end{align*}
which implies that 
\begin{align*}
  E[\|\AL^{j-1}-\ALt^{j}\|^2] \le \frac{1}{1 - \frac{4+4M+4M\theta}{\sqrt{n}}} 
  E[\|\AL^{j}-\ALt^{j+1}\|^2] \le \rho E[\|\AL^{j}-\ALt^{j+1}\|^2],
\end{align*}
where the last inequality is based on Preposition \ref{prep:lagcond}. 

\subsection{Proof of Theorem \ref{thm:converge_atomic}}
\label{app:converge_atomic}

First, we define $T(\bw, \AL)$ to be a $n$-dimensional vector such that
\begin{equation*}
  (T(\bw, \AL))_t = T_t(\bw, \AL_t) \text{ for all } t, 
\end{equation*}
We can then bound the distance $E[\|T(\bw^j, \AL^j)-T(\hat{\bw}^j, \AL^j)\|^2]$ by (we omit the expectation in the following derivation):
\begin{align*}
  \|T(\bw^j, \AL^j)-T(\hat{\bw}^j, \AL^j)\|^2 &= \sum_{t=1}^n \big(T_t(\bw^j, \alpha_t^j)-T_t(\hat{\bw}^j, \alpha_t^j)\big)^2 \\
  &\leq \sum_t \big( \frac{(\bw^j - \hat{\bw}^j)^T \bx_t}{\|\bx_t\|^2}  \big)^2 \ \ \text{ (By Proposition \ref{prep:Xw_diff})}\\
  &= \|\bar{X}(\bw^j - \hat{\bw}^j)\|^2 \\
  &\leq M^2 \big( \sum_{t=j-\tau}^{j-1}\|\AL^{t+1}-\AL^t\| \big)^2 \ \ \text{ (By Proposition \ref{prep:nonexp})} \\
  &\leq \tau M^2 \big( \sum_{t=j-\tau}^{j-1} \|\AL^{t+1}-\AL^t\|^2 \big) \\
  &\leq \tau M^2 \big( \sum_{t=1}^\tau \rho^t \|\AL^j-\AL^{j+1}\|^2 \big) \ \ \text{ (By Lemma \ref{lm:rho_decrease})}\\
  &\leq \frac{\tau M^2}{n} (\sum_{t=1}^\tau \rho^t) \|T(\hat{\bw}^j, \AL^j)-\AL^j\|^2\\
  &\leq \frac{\tau^2 M^2}{n} \rho^\tau \|T(\hat{\bw}^j, \AL^j)-\AL^j\|^2 
\end{align*}
Since $\rho^{(\tau+1)/2}\leq e$, we have 
  $\rho^{\tau+1}\leq e^2 $, 
  so $\rho^{\tau}\leq e^2$ since $\rho\geq 1$. 
  Therefore, 
  \begin{equation}
    \|T(\bw^j, \AL^j)-T(\hat{\bw}^j, \AL^j)\|^2 \leq \frac{\tau^2 M^2 e^2}{n}
    \|T(\hat{\bw}^j, \AL^j)-\AL^j\|^2. 
    \label{eq:Tdiff_bound}
  \end{equation}
  As a result, 
  \begin{align}
    \|T(\bw^j, \AL^j)-\AL^j\|^2 &= \|T(\bw^j, \AL^j)-T(\hat{\bw}^j, \AL^j) + T(\hat{\bw}^j, \AL^j) - \AL^j\|^2\nonumber\\
    &\leq 2\big(\|T(\bw^j, \AL^j)-T(\hat{\bw}^j, \AL^j)\|^2 + \|T(\hat{\bw}^j, \AL^j)-\AL^j\|^2\big) \nonumber\\
    &\leq 2(1+\frac{e^2\tau^2 M^2}{n}) \|T(\hat{\bw}^j, \AL^j)-\AL^j\|^2. \label{eq:T_alpha_bound}
  \end{align}

Next, we bound the decrease of objective function value by 
\begin{align*}
  D(\AL^j) - D(\AL^{j+1}) &= D(\AL^j) - D(\bar{\AL}^{j+1}) + D(\bar{\AL}^{j+1}) - D(\AL^{j+1}) \\
  &\geq \frac{\|\bx_{i(j)}\|^2}{2} \|\AL^j_{i(j)} - T_{i(j)}(\bw^j, \AL^j)\|^2 - \frac{L_{max}}{2}
  \| T_{i(j)}(\bw^j, \AL^j) - T_{i(j)}(\hat{\bw}^j, \AL^j)\|^2 
\end{align*}
So 
\begin{align*}
  E[D(\AL^j)] - E[D(\AL^{j+1})] &\geq \frac{R_{min}^2}{2n} E[ \|T(\bw^j, \AL^j)\|^2  ] 
  - \frac{L_{max}}{2n} E[ \|T(\hat{\bw}^j, \AL^j)-\AL^j\|^2 ] \\
  & \geq \frac{R_{min}^2}{2n} E[ \|T(\bw^j, \AL^j)-\AL^j\|^2] 
  - \frac{L_{max}}{2n}\frac{\tau^2 M^2 e^2}{n} E[\|T(\hat{\bw}^j, \AL^j)-\AL^j\|^2] \\
  & \geq \frac{R_{min}^2}{2n}E[\|T(\bw^j, \AL^j)-\AL^j\|^2] - \frac{2L_{max}}{2n}\frac{\tau^2 M^2 e^2}{n}
  (1+\frac{e\tau M}{\sqrt{n}}) E[\|T(\bw^j, \AL^j)-\AL^j\|^2] \\
  & \geq \frac{R_{min}^2}{2n}\left(1-\frac{2L_{max}}{R_{min}^2}(1+\frac{e\tau M}{\sqrt{n}}) (\frac{\tau^2 M^2 e^2}{n})\right) E[\|T(\bw^j, \AL^j)-\AL^j\|^2] 
\end{align*}
Let $b=(1-\frac{2L_{max}}{R_{min}^2}(1+\frac{e\tau M}{\sqrt{n}}) (\frac{\tau^2 M^2 e^2}{n}))$ and combine the above
inequality with eq~\eqref{eq:globalbound} we have
\begin{align*}
  E[D(\AL^j)]- E[D(\AL^{j+1})] & \geq b \kappa E[\|\AL^j - P_S(\AL^j)\|^2] \\
  & \geq \frac{b \kappa}{L_{max}} E[D(\AL^j)-D^*]. 
\end{align*}
Therefore, we have 
\begin{align*}
  E[D(\AL^{j+1})] - D^* &= E[D(\AL^j)] - (E[D(\AL^j)]-E[D(\AL^{j+1})]) - D^* \\
  &\leq (1-\frac{b\kappa}{L_{max}})(E[D(\AL^j)]-D^*). 
\end{align*}

\end{document}